\documentclass[journal]{IEEEtran}

\usepackage[utf8]{inputenc}
\usepackage[T1]{fontenc}
\usepackage{amsmath,amssymb,amsfonts}
\usepackage{amsthm}
\usepackage{graphicx}
\usepackage{booktabs}
\usepackage{array}
\usepackage{cite}
\usepackage{enumitem}
\usepackage[colorlinks=true,linkcolor=blue,citecolor=blue,urlcolor=blue]{hyperref}

\newtheorem{theorem}{Theorem}
\newtheorem{proposition}{Proposition}
\newtheorem{definition}{Definition}

\begin{document}

\title{Behavior--Realization Separation for Constrained Physical Human--Robot Interaction}

\author{Yongyan~Cao%
\thanks{Y. Cao is with Voryx Robotics LLC, San Jose, CA 95136, USA
(e-mail: yongyancao@gmail.com).}}

\maketitle

\begin{abstract}
Physical human--robot interaction software often couples desired-behavior
specification with constrained realization; we treat these as separate
layers. A \emph{behavior layer} supplies a desired contact-port
acceleration $a_k^{\mathrm{id}}=f_\theta(e_k,\dot e_k,F_{h,k})$. A
\emph{realization layer} converts it into constrained robot commands and
reports total desired-versus-realized acceleration error instead of hiding
it in saturation. A same-objective unconstrained counterfactual separates
regularization from constraint intervention, while plant data expose model
error. This paper implements a receding-horizon
quadratic program realizing memoryless affine behaviors. Changing the
behavior modifies objective coefficients through $(C_\theta,G_\theta)$
while the robot-command variable and feasible set remain unchanged. A
planar study instantiates impedance and admittance; the same running
layer accepts an impedance--admittance--impedance reassignment without
reconstruction, under its existing rate limit. On a torque-controlled
7-DOF Franka FR3 in MuJoCo, the runtime freezes task-space dynamics per
solve and enforces torque feasibility across its horizon. Under a
sustained 20~N push, it holds a slack-relaxed workspace boundary to
within approximately 0.1--0.2~mm, versus 4.4~cm (impedance) and 4.7~cm
(admittance) overshoot from instantaneous clipping. A derated actuator
budget then activates the torque constraint: horizon-wide enforcement
keeps its frozen-model plan feasible to $2.1\times10^{-4}$~N$\cdot$m,
whereas a first-step-only ablation plans up to 11.329~N$\cdot$m beyond budget; on
the executed nonlinear plant, where both share the same local-model
error, the gap is smaller but still favors horizon-wide enforcement
(0.161 vs.\ 0.380~N$\cdot$m). These results are a focused proof of
behavior--realization separation, not a new MPC or safety-filter principle
(scope in Section~\ref{sec:scope}).
\end{abstract}

\begin{IEEEkeywords}
Physical human--robot interaction, behavior--realization separation,
robot-control architecture, predictive realization, model predictive
control, impedance control, admittance control.
\end{IEEEkeywords}

\section{Introduction}
\label{sec:intro}

Interaction behavior should specify only desired interaction dynamics;
physical feasibility should be realized independently by the robot.

A pHRI software stack has two conceptually different responsibilities:
\begin{enumerate}
\item specify the desired interaction behavior; and
\item realize that behavior through commands compatible with the physical
robot.
\end{enumerate}

\noindent\textbf{Architectural hypothesis.} We argue for a general
software-engineering principle, stated above, and for a specific reason
it should hold: desired interaction behavior is a statement in
interaction-state space --- position error, velocity error, and human
force --- while physical feasibility is a statement in robot-constraint
space --- joint torque, joint limits, and workspace geometry. These are
different mathematical objects with different natural variables, so a
representation that mixes them (as a fixed impedance or PD law does)
couples two things that do not have to be coupled. Separating them into a
behavior layer and a realization layer connected by an explicit interface
is therefore not only a convenient software boundary but a decomposition
along the actual variables each responsibility depends on. This is an
architectural claim, not merely an empirical one: it also implies that a
behavior specification can be authored, tested, and replaced
independently of the robot, and that feasibility logic can be verified
once and reused across every behavior that respects the interface
(Section~\ref{sec:why} argues this directly). This paper tests the
principle with one executable desired-acceleration interface, developed
and validated only for its memoryless affine subclass
(Sections~\ref{sec:generators}--\ref{sec:fr3}); that affine QP realization
layer is one instance supporting the general claim, not the claim itself,
and its success does not establish that every possible behavior
representation can use the same runtime unchanged.

Conventional impedance control combines behavior specification and
command realization in one mass--spring--damper feedback law.
Variable-impedance methods add adaptation, and model-predictive variants
can optimize impedance parameters or jointly plan motion and compliance.
Those formulations are useful, but changing the behavior representation
generally changes the controller that interprets it.

A concrete consequence of that coupling is actuator saturation. A
conventional impedance or PD law computes joint torque directly from the
interaction error, $\tau=J^\top(-K_de-D_d\dot e)+\tau_{\mathrm{ff}}$,
where $\tau_{\mathrm{ff}}$ includes gravity/Coriolis compensation and, on
a redundant arm, orientation-hold and null-space terms. Neither term
reasons about the actuator limit. A stiff impedance, large interaction
force, or unfavorable configuration can therefore drive the command past
$\tau_{\max}$, and clipping silently changes the delivered acceleration.

The architecture studied here assigns this conflict to a realization
layer. Its current predictive implementation constrains the \emph{total
frozen-model torque} --- feedforward, orientation, null-space, and
interaction terms --- at every prediction step
(Section~\ref{sec:fr3-architecture}). Any deviation from the behavior
specification appears in the total realization residual; a counterfactual
audit distinguishes constraint intervention from secondary-objective and
model terms. Because the
manipulator prediction is frozen at each solve, this remains a
local-model mechanism rather than a global nonlinear-plant guarantee.

We study a different decomposition:

\begin{figure}[t]
\centering
\includegraphics[width=0.95\linewidth]{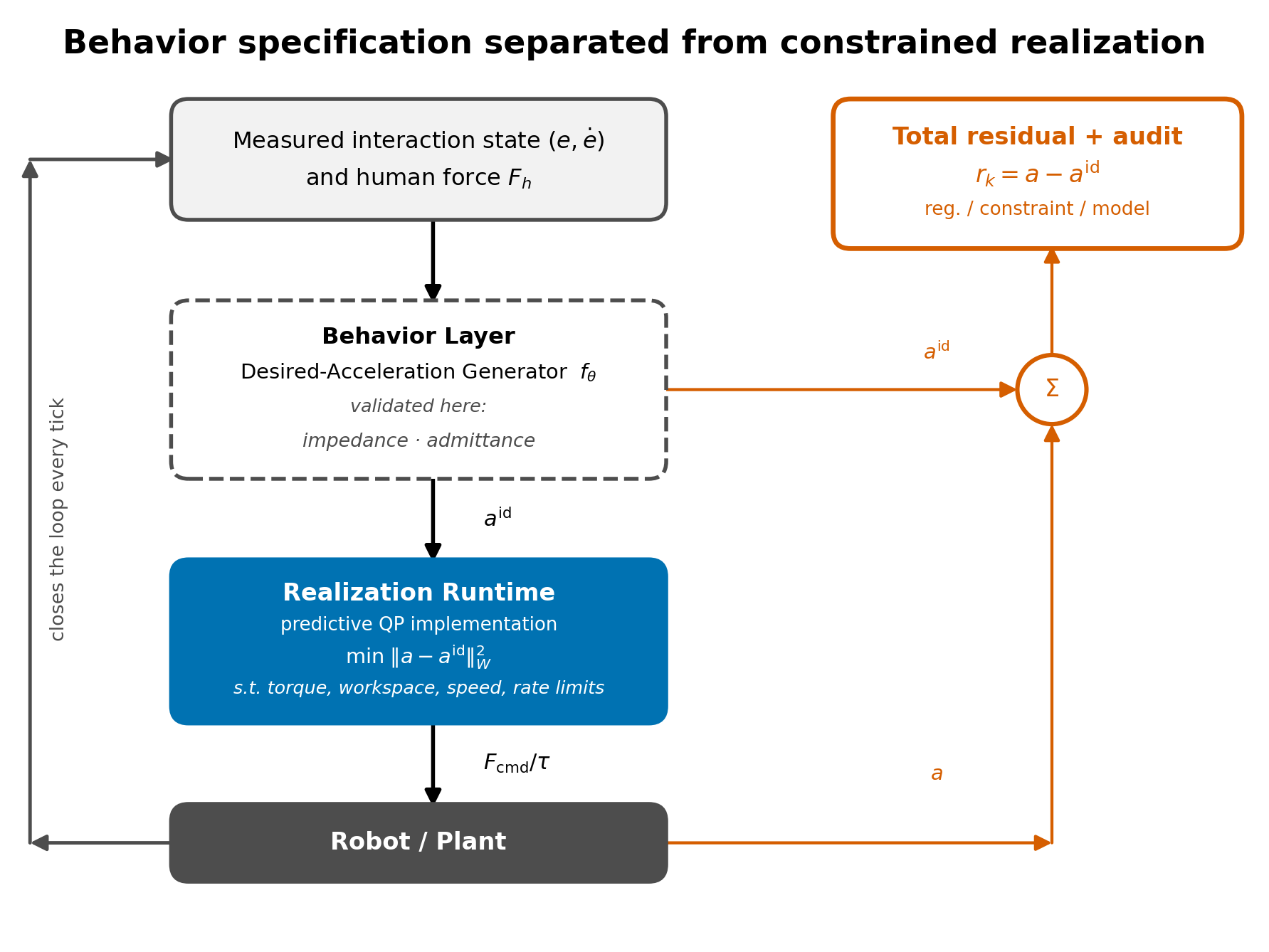}
\caption{Behavior and realization as separate layers. The behavior
layer (dashed) supplies $a^{\mathrm{id}}$; the realization runtime
(blue) converts it into a constrained command and reports
$r_k=a-a^{\mathrm{id}}$; the audit in
(\ref{eq:residual-decomposition}) separates its causes.}
\label{fig:architecture}
\end{figure}

The behavior layer expresses intent without choosing a robot command. The
realization runtime handles feasibility, constraint enforcement, and
command optimization without deciding what stiffness, damping, or
force-response semantics should mean. The runtime reports total behavior
error and a constraint-intervention diagnostic rather than conflating the
two.

\noindent\textbf{The current executable instance.} For memoryless affine
behavior models, the predictive runtime retains one decision variable,
feasible set, and objective template; the desired behavioral acceleration
enters through $(C_\theta,G_\theta)$. Section~\ref{sec:properties} states
this limited structural property formally, and
Section~\ref{sec:generators} implements impedance and admittance.
Nonlinear or stateful specifications may require different prediction
machinery and are not instances of the present QP by assertion.

This separation changes the scientific question. Instead of asking
whether a particular MPC formulation produces compliant tracking, we ask:

\begin{quote}
\emph{How should robot-control software separate desired interaction
behavior from its constrained physical realization, and how should
unavoidable intervention be exposed?}
\end{quote}

The contributions are:
\begin{itemize}
\item a behavior--realization architecture that assigns behavior
semantics and physical command feasibility to separate software layers;
\item a precise desired-acceleration interface, a total realization
residual, and a same-objective constrained/unconstrained audit that makes
constraint intervention separately observable;
\item one predictive realization runtime for the memoryless affine
subclass, with a fixed robot-command variable and feasible set across
impedance and admittance behavior layers;
\item reproducible planar and FR3 simulations showing online
behavior-layer replacement, anticipatory state-constraint handling, and
manipulator-level realization under command and torque limits.
\end{itemize}

We do \textbf{not} claim that architectural separation in the abstract,
impedance behavior in MPC, or constrained interaction control is new.
Task-space inverse dynamics, whole-body QPs, reference governors, and
hierarchical robot controllers already separate related responsibilities,
while model predictive impedance and
interaction controllers provide important integrated formulations. The
narrower contribution is the four-part combination of (i) an interaction
law evaluated along predicted states and exchanged at an acceleration-level
contract, (ii) one fixed robot-command runtime across generators, (iii) a
same-objective constrained/unconstrained counterfactual that decomposes the
realization residual, and (iv) live generator substitution without rebuilding
runtime state. The executable evidence covers two affine behaviors.

\section{Related Work and Positioning}
\label{sec:related}

The closest literature is best distinguished by what the predictive or
adaptive layer optimizes and by what quantity the robot-level controller
is asked to reproduce.

\begin{table*}[t]
\centering
\caption{Positioning against related work.}
\label{tab:related}
\small
\begin{tabular}{@{}>{\raggedright\arraybackslash}p{0.19\linewidth}
>{\raggedright\arraybackslash}p{0.27\linewidth}
>{\raggedright\arraybackslash}p{0.24\linewidth}
>{\raggedright\arraybackslash}p{0.16\linewidth}@{}}
\toprule
Class & Behavior representation & Optimized robot-level variable & Role of prediction \\
\midrule
Predictive variable impedance & $M_d,D_d,K_d$, often with a trajectory & Impedance parameters and/or trajectory & Select compliant behavior \\
Reference-model adaptive impedance & $M_d\ddot e+D_d\dot e+K_de=F_h$ & Adaptive feedback parameters & Usually absent \\
Direct robot MPC & State or tracking-error dynamics & Force, torque, velocity, or acceleration & Optimize robot motion directly \\
Impedance/interaction MPC & Impedance, force, or coupled interaction model & Robot command and sometimes behavior parameters & Embed interaction objectives and constraints \\
Reference governor & A single reference/setpoint signal & Filtered reference fed to a fixed inner-loop controller & Supervise a non-predictive inner loop to enforce constraints \\
Predictive safety filter & Nominal controller input & Minimally modified admissible input & Certify or replace a proposed command \\
Constrained compliance QP & Desired impedance/compliance response & Torque, velocity, or acceleration & Reproduce compliant behavior under robot constraints \\
TSID / hierarchical whole-body QP & Desired task accelerations and task priorities & Joint acceleration, contact force, and/or torque & Usually instantaneous constrained realization \\
Predictive hierarchical task-space control & Task trajectories and acceleration-level errors & Robot state and command trajectories & Extend hierarchical task objectives over a horizon \\
This work & $a^{\mathrm{id}}=C_\theta x+G_\theta F_h$ & Robot command $u$ & Minimize dynamics-realization error under constraints \\
\bottomrule
\end{tabular}
\end{table*}

Several approaches use MPC to adapt impedance parameters or to plan
impedance and motion together. Anand et al.\ place a learned predictive
policy above a low-level variable-impedance controller~\cite{ref1}.
Haninger et al.\ predict interaction and optimize trajectory and
impedance online while incorporating safety constraints~\cite{ref2}.
Recent predictive variable-impedance formulations continue this
parameter-adaptation direction~\cite{ref3}. Their optimized behavior
variables are stiffness, damping, or related trajectory parameters.

A related line of work is model-reference adaptive impedance control,
which specifies desired impedance dynamics and designs an adaptive
nonlinear controller so that the robot approaches them~\cite{ref4}. This
establishes the value of separating a desired model from the physical
robot, but it does not by itself provide horizon-wide handling of coupled
state and input constraints.

A separate class applies MPC directly to the robot or to a
feedback-linearized error model, with decision variables that are robot
commands rather than impedance parameters. Such a controller can generate
an effective closed-loop impedance, but unless a desired interaction
model appears explicitly in the prediction objective, that impedance is
an induced property of the optimizer rather than the behavior it is
asked to realize. This distinction separates the present formulation
from our earlier double-integrator tracking MPC: the earlier controller
penalizes interaction error, whereas the present controller penalizes
disagreement with an independently specified interaction dynamical
system.

Task-space inverse dynamics and hierarchical whole-body QPs are a more
fundamental precedent for the interface itself: they already accept desired
task accelerations as modular objectives and solve for dynamically feasible
robot accelerations, contact forces, or torques. Del Prete et al.'s TSID, for
example, treats prioritized motion/force tasks for constrained fully actuated
robots~\cite{ref19}. Predictive hierarchical task-space control extends
acceleration-error objectives over a finite horizon for underactuated and
constrained robots~\cite{ref20}. Thus neither ``desired task acceleration''
nor ``QP realization under robot constraints'' is claimed as new. Relative
to these frameworks, the contribution tested here is narrower: the task is
an interaction dynamical law evaluated along the predicted state; the same
runtime is retained across impedance and admittance generators; its residual
is split using a same-objective counterfactual; and the generator is replaced
online without reconstructing the runtime state.

Closer still is model predictive impedance and interaction control.
Bednarczyk et al.\ formulate impedance behavior through MPC and handle
practical constraints such as velocity, energy, and jerk~\cite{ref5}.
Gold et al.\ formulate model predictive interaction control using robot
and interaction models in the optimal-control problem and express
manipulation objectives through costs and constraints~\cite{ref6}.
Minniti et al.\ combine whole-body MPC with online adaptation of
robot--environment interaction models for constrained mobile
manipulation~\cite{ref7}. These papers are close precedents and make a
broad ``first constrained interaction-dynamics MPC'' claim untenable.
Alharbat et al.\ explicitly compare NMPC impedance, cascaded admittance,
and hybrid position/force paradigms~\cite{ref14}; their later predictive
admittance controller predicts both robot and desired impedance states,
respects actuator constraints, and is validated in physical aerial
interaction~\cite{ref15}. Kumbhar et al.\ pass admittance-generated plans
to a constrained whole-body QP on Digit hardware~\cite{ref17}. These are
stronger validation precedents than the present simulation-only study;
the remaining distinction sought here is the fixed cross-behavior
desired-acceleration contract, not predictive compliance itself.

QP realization of compliant behavior and residual reporting also predate
this work. Leboutet et al.\ compute robot commands under kinematic and
dynamic constraints and propagate generalized-force residuals when a
limb cannot realize the requested compliance~\cite{ref16}. More recent
safety-critical adaptive impedance work filters a nominal impedance law
through a QP with joint-state and torque constraints and provides formal
invariance and boundedness results~\cite{ref18}. Accordingly, neither
``QP-constrained compliance'' nor ``an explicit residual'' is claimed
independently as novel here.

Reference governors and predictive safety filters are the closest
architectural relatives to this paper's separation, from a different literature than the
parameter-adaptation and direct-MPC work described above. A reference
governor sits upstream of a fixed, already-designed inner-loop controller
and modifies the reference signal so that constraints on the resulting
closed-loop trajectory are respected~\cite{ref8}. The parallel is direct:
a reference governor also separates what is commanded from what is safe
to execute. The difference is where prediction lives and what is
exchanged. A reference governor typically supervises a fixed inner loop
and manipulates a reference signal; the realization layer here is itself
predictive, and the exchanged object is an affine desired-acceleration
law evaluated over predicted interaction states. The realization residual
additionally measures the difference between requested and delivered
dynamics. Predictive safety filters similarly accept an arbitrary
upstream command and use model-based backup optimization to certify or
modify it~\cite{ref13}. This connection limits the novelty claim: the contribution is
not separation in the abstract, but its formulation and measurement at
the interaction-dynamics level.

Control barrier function (CBF) quadratic programs are another related
architectural pattern from a different literature: a CBF-QP takes a
nominal control input and projects it, at each tick, onto the
minimal-deviation admissible set defined by a safety
constraint~\cite{ref10}. The realization layer here is structurally
similar --- a minimal-deviation projection of a requested behavior onto
a robot-feasible set --- but the projected object is a full behavior
law evaluated over a receding horizon rather than a single nominal
input. The total deviation is reported, and the constrained/unconstrained
counterfactual identifies its constraint-intervention component rather
than treating every nonzero residual as safety action. We do
not compare against a CBF-QP filter or a reference-governor instance in
this paper; Section~\ref{sec:scope} lists a matched comparison against
the closest alternative architecture as open.

Theorem~\ref{thm:invariance} is a structural statement about a
parametric quadratic program: for the affine generator class, the QP's
decision variable, feasible set, and constraint structure are fixed
while only its cost coefficients vary with the generator parameter
$\theta$. This is the object studied by the explicit MPC / multi-parametric
QP literature~\cite{ref11,ref12}, which characterizes the solution map
$\theta\mapsto U_k^\star(\theta)$ as piecewise affine. The present paper
does not use that machinery --- the QP is solved online at every tick,
not precomputed offline --- but Theorem~\ref{thm:invariance} is best
read as an instance of that established parametric-QP structure applied
to the behavior--realization interface, not as a new structural fact
about quadratic programs.

Most rows above optimize a quantity tied to one
behavior representation: impedance parameters, adaptive impedance gains,
or robot-level tracking against a specific interaction model. The
reference-governor and safety-filter rows are architecturally closest. Our narrower
distinction is an interaction-level behavior contract whose desired
acceleration is separate from robot feasibility, together with a total
residual and counterfactual intervention audit. The executable and theoretical
evidence is limited to memoryless affine impedance and admittance
behaviors.

\section{Behavior--Realization Formulation}
\label{sec:behaviors}

\subsection{Behavior--Realization Interface}
\label{sec:interface}

Consider a rigid manipulator
\begin{equation}
M(q)\ddot q+C(q,\dot q)\dot q+g(q) =\tau+J(q)^\top F_h,
\end{equation}
where $q\in\mathbb R^{n_q}$, $\tau\in\mathbb R^{n_q}$, and $F_h$ is the
human wrench applied at the interaction port. For a task coordinate
$y=h(q)\in\mathbb R^{n_y}$,
\begin{equation}
\dot y=J(q)\dot q, \qquad \ddot y=J(q)\ddot q+\dot J(q,\dot q)\dot q.
\end{equation}
Substitution of the joint dynamics gives the affine acceleration map
\begin{equation}
\ddot y =b_y(q,\dot q,F_h)+G_y(q)\tau,
\end{equation}
where
\begin{align}
b_y(q,\dot q,F_h) &= J M^{-1}\left(-C\dot q-g+J^\top F_h\right)+\dot J\dot q,\\
G_y(q)&=JM^{-1}.
\end{align}
Let $y_d$ be a nominal interaction pose and define
\begin{equation}
e=y-y_d, \qquad \dot e=\dot y-\dot y_d.
\end{equation}

\begin{definition}[Behavior-layer interface]
\label{def:interface}
In the present architecture, the behavior layer is a causal map
\begin{equation}
\ddot e^{\mathrm{id}} =f_\theta(e,\dot e,F_h,z),
\end{equation}
where $z$ denotes optional internal behavior state and $\theta$ denotes
behavior parameters. Its output is a desired contact-port acceleration
--- termed the \emph{desired behavioral acceleration} throughout this
paper --- not a robot command. The behavior layer is deliberately
unaware of robot torque limits, joint limits, or workspace geometry;
those belong to the realization layer. The general map defines the
interface boundary; the theorem and experiments below cover only its
memoryless affine specialization.
\end{definition}

The architecture does not ask the behavior layer to compute $\tau$. It
supplies $\ddot e^{\mathrm{id}}$, and the realization layer selects
$\tau$ so that the physical $\ddot e$ approaches $\ddot e^{\mathrm{id}}$
while satisfying robot constraints. The predictive QP developed below is
one realization-layer implementation, not the definition of the layer
itself.

The full manipulator formulation is the target architecture. The planar
study below uses the following exactly discretized specialization so
that behavior--realization separation can be isolated without kinematic
or inverse-dynamics confounds: a planar point mass,
\begin{equation}
m_r\ddot p(t)=u(t)+F_h(t),
\end{equation}
where $p\in\mathbb R^2$ is displacement from a nominal interaction pose,
$u\in\mathbb R^2$ is commanded robot force, $F_h\in\mathbb R^2$ is
measured human force, and $m_r>0$ is the realized robot mass. Define
\begin{equation}
x=\begin{bmatrix}p^\top&v^\top\end{bmatrix}^\top,\qquad v=\dot p.
\end{equation}
Under zero-order hold with period $\Delta t$,
\begin{equation}
x_{k+1}=Ax_k+B(u_k+F_{h,k}),
\end{equation}
\begin{equation}
A= \begin{bmatrix} I&\Delta tI\\0&I \end{bmatrix}, \qquad
B= \begin{bmatrix} \frac{\Delta t^2}{2m_r}I\\ \frac{\Delta t}{m_r}I \end{bmatrix}.
\end{equation}
The actual acceleration is
\begin{equation}
a_k=\frac{u_k+F_{h,k}}{m_r}.
\end{equation}

For this planar plant, Definition~\ref{def:interface} specializes with
$x_k=[p_k^\top\ v_k^\top]^\top$ in place of $(e,\dot e)$:
\begin{equation}
a_k^{\mathrm{id}} =f_\theta(x_k,F_{h,k},z_k).
\end{equation}
For the affine class considered throughout this paper,
\begin{equation}
a_k^{\mathrm{id}}=C_\theta x_k+G_\theta F_{h,k}.
\end{equation}

The gap between what a behavior specifies and what the robot delivers is
the \emph{total realization residual},
\begin{equation}
r_k=a_k-a_k^{\mathrm{id}}.
\end{equation}
If $r_k=0$, the robot exactly realizes the desired behavioral
acceleration at that sample. Nonzero $r_k$ alone, however, does not prove
that a constraint caused the deviation: the finite force and force-rate
weights can already shift the unconstrained optimum, and a physical plant
can differ from the prediction model.

We therefore audit three terms. Let $U_k^{c\star}$ be the deployed
constrained optimum and $U_k^{0\star}$ the counterfactual optimum of the
\emph{same} horizon objective at the same state, force forecast, and
previous command after removing physical constraints. Let $a_k^c$ and
$a_k^0$ be their first-step model accelerations, and let $a_k^{\rm emp}$
be the acceleration measured from the plant. Define
\begin{align}
r_k^{\rm reg}&=a_k^0-a_k^{\rm id}, &
r_k^{\rm con}&=a_k^c-a_k^0,\\
r_k^{\rm mod}&=a_k^{\rm emp}-a_k^c. &&
\end{align}
Then the empirical total error closes exactly as
\begin{equation}
r_k^{\rm emp}=r_k^{\rm reg}+r_k^{\rm con}+r_k^{\rm mod}.
\label{eq:residual-decomposition}
\end{equation}
Here $r^{\rm reg}$ contains secondary-objective compromise,
$r^{\rm con}$ is the joint intervention of all enforced constraints, and
$r^{\rm mod}$ contains frozen-model and acceleration-measurement error.
The split does not attribute $r^{\rm con}$ to one individual active row;
that would require a multiplier- or leave-one-constraint-out analysis.
For the exact planar model $r^{\rm mod}=0$. The FR3 audit evaluates the
first two terms at 50~Hz manager updates and obtains $r^{\rm mod}$ from
the following 1~kHz plant sample.

All four quantities retain physical acceleration units. They are logging
and causal-diagnostic signals, not normalized cross-generator scores; a
generator-relative or energy-normalized metric would answer a different
comparison question.

This definition is intentionally stronger than commanding a nominal
impedance force and penalizing
\begin{equation}
\left\|u_k-u_k^{\mathrm{imp}}\right\|_2^2.
\end{equation}
The latter is controller-output tracking: it penalizes deviation from a
nominal force command. $\|r_k\|_W^2$ is tracking too, but of a
categorically different target --- the \emph{desired behavioral
acceleration} the generator specifies, not a prior controller's output
--- so it compares two physical accelerations directly. In this paper
that distinction is demonstrated for impedance and admittance, neither of
which requires the QP to track a prior controller output.

\subsection{Implemented Behavior Layers}
\label{sec:generators}

The two behavior layers implemented below use the same QP template
developed in Section~\ref{sec:qp} and differ only in $(C_\theta,G_\theta)$,
as formalized by Theorem~\ref{thm:invariance}. The first is an impedance
generator, whose desired impedance is
\begin{equation}
M_d a^{\mathrm{id}}+D_dv+K_dp=F_h.
\end{equation}
For isotropic parameters,
\begin{equation}
a^{\mathrm{id}} =-\frac{K_d}{M_d}p-\frac{D_d}{M_d}v+\frac{1}{M_d}F_h,
\end{equation}
so
\begin{equation}
C_{\mathrm{imp}} =\begin{bmatrix} -K_dM_d^{-1}I&-D_dM_d^{-1}I \end{bmatrix},
\qquad G_{\mathrm{imp}}=M_d^{-1}I.
\end{equation}
Unlike variable-impedance MPC (Section~\ref{sec:related}), $M_d,D_d,K_d$
are not decision variables in the present formulation; they describe the
requested behavior.

The second is a force-guided admittance generator, which requests a
force-dependent velocity:
\begin{equation}
T_a a^{\mathrm{id}}+v=YF_h,
\end{equation}
or
\begin{equation}
a^{\mathrm{id}}=-T_a^{-1}v+T_a^{-1}YF_h.
\end{equation}
It has no position-restoring term. After force release, velocity decays
and the displaced position is retained. This provides a qualitatively
different behavior through the same acceleration interface.

\subsection{Predictive Realization Runtime}
\label{sec:qp}

Section~\ref{sec:generators} supplies $a^{\mathrm{id}}$. For a memoryless
affine generator it is represented by $C_\theta,G_\theta$, producing one
condensed-QP template whose numerical cost coefficients change with the
generator while its decision variable and feasible set do not. At time
$k$, let
\begin{equation}
U_k= \begin{bmatrix} u_{0|k}^\top&\cdots&u_{N-1|k}^\top \end{bmatrix}^\top.
\end{equation}
The controller solves
\begin{equation}
\begin{aligned}
\min_{U_k}\quad& \sum_{i=0}^{N-1}\Big(
\|a_{i|k}-a_{i|k}^{\mathrm{id}}\|_W^2
+\lambda_u\|u_{i|k}\|_2^2\\
&\hspace{25mm}+\lambda_{\Delta u}
\|u_{i|k}-u_{i-1|k}\|_2^2\Big)\\
\text{s.t.}\quad&x_{i+1|k}=Ax_{i|k}+B(u_{i|k}+\hat F_{h,i|k}),\\
&a_{i|k}=(u_{i|k}+\hat F_{h,i|k})/m_r,\\
&a_{i|k}^{\mathrm{id}} =C_\theta x_{i|k}+G_\theta\hat F_{h,i|k},\\
&|u_{i|k}|_\infty\le u_{\max},\\
&|u_{i|k}-u_{i-1|k}|_\infty \le\dot u_{\max}\Delta t,\\
&|p_{i+1|k}|_\infty\le p_{\max},\\
&|v_{i+1|k}|_\infty\le v_{\max}.
\end{aligned}
\end{equation}
Only $u_{0|k}^\star$ is executed. $u_{-1|k}$ in the $i=0$ rate penalty
and rate constraint denotes the command actually applied at the previous
solve, carried by the controller between calls (zero before the first
solve), so both are well-defined from the very first call. The
implementation uses the measured force held constant across the horizon,
$\hat F_{h,i|k}=F_{h,k}$, rather than oracle knowledge of the scripted
force. A learned or physics-based force predictor can be substituted
without changing the generator interface.

The objective $\|a_{i|k}-a_{i|k}^{\mathrm{id}}\|_W^2$ is tracking --- of a
categorically different target than a position or velocity trajectory.
No desired position trajectory is constructed; the optimizer instead
compares two accelerations evaluated at the predicted interaction state:
the acceleration the constrained robot will produce and the desired
behavioral acceleration the generator specifies. Position and velocity
enter as generator states and as safety-constrained quantities, never as
tracked references in their own right.

The optimization variable is the robot command sequence $U_k$, not the
generator's parameters. The generator parameters $\theta$ are fixed
during each experiment. Switching from impedance to admittance changes
$C_\theta,G_\theta$, not the QP constraints or the command variable ---
stated formally as Theorem~\ref{thm:invariance} (Section~\ref{sec:properties}).

\section{Properties of the Affine Runtime Instance}
\label{sec:properties}

The architecture does not depend on the following QP property; rather,
the property shows how cleanly the separation is realized for the
present affine subclass. This section establishes template invariance,
convexity, exact realization under idealized conditions, and one-step
constraint inheritance, while Proposition~\ref{prop:inheritance} states
explicitly what the runtime does not guarantee.

\begin{theorem}[Affine-Generator Template-Invariance Property]
\label{thm:invariance}
Fix the plant $(A,B,m_r)$, horizon $N$, weights
$(W,\lambda_u,\lambda_{\Delta u})$, and bounds
$(u_{\max},\dot u_{\max},p_{\max},v_{\max})$. There is a single map
$(C,G)\mapsto Q(C,G)$ --- a quadratic-program template depending only on
the plant, horizon, weights, and bounds above, never on $\theta$ ---
such that, for every generator $\theta$ with affine law
$(C_\theta,G_\theta)$, the realization QP of Section~\ref{sec:qp} is
exactly $Q(C_\theta,G_\theta)$. Consequently, for any two generators
$\theta_1,\theta_2$, the instances $Q(C_{\theta_1},G_{\theta_1})$ and
$Q(C_{\theta_2},G_{\theta_2})$ share the same decision variable $U_k$
and the same feasible set, and differ only in the numerical objective
coefficients contributed by $(C_\theta,G_\theta)$, not in the
decision-variable dimensions, constraint set, or objective template.
Realizing a different affine generator therefore changes only the
supplied $(C_\theta,G_\theta)$. A nonlinear generator can make
$a_{i|k}^{\mathrm{id}}$ nonlinear in $U_k$, so the residual cost is no
longer guaranteed convex quadratic and the template $Q$ does not apply
without linearization.
\end{theorem}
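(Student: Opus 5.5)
The proof is a direct condensation argument: write every quantity in the QP of Section~\ref{sec:qp} as an affine function of $U_k$, and keep track of which coefficients depend on $(C_\theta,G_\theta)$. Fix $k$ and the data $(x_k,F_{h,k},u_{-1|k})$. Iterating $x_{i+1|k}=Ax_{i|k}+B(u_{i|k}+\hat F_{h,i|k})$ gives
\[
X=\Phi x_k+\Gamma U_k+\Gamma_F\hat F,
\]
where $\Phi$, $\Gamma$, $\Gamma_F$ are block matrices built only from powers of $A$ and $B$. Here $X$ stacks $x_{1|k},\dots,x_{N|k}$, and $\hat F$ stacks the forecast. Likewise the stacked acceleration is
\[
\mathcal A=(U_k+\hat F)/m_r .
\]
Neither expression involves $(C,G)$.

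Next I will substitute these into the constraints. The bounds $|u_{i|k}|_\infty\le u_{\max}$, $|u_{i|k}-u_{i-1|k}|_\infty\le \dot u_{\max}\Delta t$, $|p_{i+1|k}|_\infty\le p_{\max}$ and $|v_{i+1|k}|_\infty\le v_{\max}$ become a polyhedron
\[
\{U_k:\ H U_k\le h(x_k,\hat F,u_{-1|k})\}.
\]
Both $H$ and $h$ are assembled from $\Phi,\Gamma,\Gamma_F$, a difference operator, and the bounds. So the feasible set depends on the plant, horizon, bounds and the current data, but never on $\theta$. The decision variable is $U_k\in\mathbb R^{2N}$ in every case.

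For the cost, the desired acceleration $a^{\rm id}_{i|k}=C x_{i|k}+G\hat F_{h,i|k}$ is evaluated at the predicted states $x_{0|k},\dots,x_{N-1|k}$. Stacking these gives
\[
\mathcal A^{\rm id}=\mathcal C(\Phi' x_k+\Gamma' U_k+\Gamma_F'\hat F)+\mathcal G\hat F,
\]
with $\mathcal C=I_N\otimes C$ and $\mathcal G=I_N\otimes G$. The primed matrices are the shifted versions that include $x_{0|k}=x_k$. The residual $\mathcal A-\mathcal A^{\rm id}$ is therefore $R(C)U_k+s(C,G)$, where $R(C)=I/m_r-\mathcal C\Gamma'$ and $s$ collects the remaining terms. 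Adding the $\lambda_u$ and $\lambda_{\Delta u}$ terms, which do not depend on $(C,G)$, gives
\[
Q(C,G):\ \min_{U}\ \tfrac12U^\top P(C)U+q(C,G)^\top U \quad\text{s.t. } HU\le h,
\]
with
\[
P(C)=2\bigl(R^\top\mathcal W R+\lambda_u I+\lambda_{\Delta u}D^\top D\bigr),
\]
where $\mathcal W=I_N\otimes W$. This explicit formula is the map in the statement. It is built only from the fixed data, and $\theta$ enters only through its arguments. Plugging in $(C_\theta,G_\theta)$ reproduces the QP of Section~\ref{sec:qp} exactly, which gives the shared variable and shared feasible set for any $\theta_1,\theta_2$. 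The objective template is also shared, since only the numerical values of $P$ and $q$ change. Convexity follows from $P\succeq 0$, and $P\succ 0$ when $\lambda_u>0$.

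The closing remark about nonlinear generators needs a short counterexample. If $f_\theta$ is nonlinear in $x$, then $a^{\rm id}$ composed with the affine map $U\mapsto X$ is nonlinear in $U$. The squared residual is then generally not quadratic; a quadratic term such as $x^2$ already yields a quartic cost. So $Q$ cannot represent it without linearization.

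The main obstacle is bookkeeping rather than depth. The index shift between $a^{\rm id}_{i|k}$, which uses $x_{i|k}$ including the given $x_{0|k}$, and the constrained states $p_{i+1|k},v_{i+1|k}$ must be handled correctly. The previous input $u_{-1|k}$ must be placed in $h$ and $q$, not in $H$ or $P$. I will handle both by defining the two stacking operators separately and checking the block structure of $\Gamma'$.
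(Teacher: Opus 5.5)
Your proposal is correct and follows the same route as the paper's proof. The constraints and lifted dynamics never reference $\theta$, so the feasible polyhedron is generator-independent, and $(C_\theta,G_\theta)$ enters only through the affine substitution $a^{\mathrm{id}}_{i|k}=C_\theta x_{i|k}+G_\theta\hat F_{h,i|k}$ in the cost; your explicit condensation into $\Phi,\Gamma,\Gamma'$, $H$, $h$, $P(C)$, $q(C,G)$ just writes out the ``collecting these pieces'' step the paper leaves implicit, and it sharpens that step by showing the Hessian depends on $C$ alone. One small point: if ``exactly'' is to hold literally rather than up to identical minimizers, keep the constant term ($s^\top\mathcal W s$ plus the $u_{-1|k}$ rate constant) in $Q(C,G)$, and say that the condensed problem is the original one with the equality-defined states eliminated.
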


\begin{proof}
Every constraint in Section~\ref{sec:qp}'s QP --- the dynamics
recursion, the acceleration definition, and the four box constraints ---
is written in terms of $A$, $B$, $m_r$, $u_{\max}$, $\dot u_{\max}$,
$p_{\max}$, $v_{\max}$, $U_k$, and the resulting state/acceleration
sequence; none references $\theta$, $C_\theta$, or $G_\theta$. The
feasible set is therefore a fixed polyhedron, identical for every
generator. The only appearance of $(C_\theta,G_\theta)$ anywhere in the
problem is the term
$a_{i|k}^{\mathrm{id}}=C_\theta x_{i|k}+G_\theta\hat F_{h,i|k}$ inside the
primary cost, entering exactly once per horizon step as an affine
substitution into $r_{i|k}=a_{i|k}-a_{i|k}^{\mathrm{id}}$. Because
$x_{i|k}$ is itself independent of $(C_\theta,G_\theta)$ and affine in
$U_k$ (established in Proposition~\ref{prop:convexity}'s proof below),
$a_{i|k}^{\mathrm{id}}$ is affine in $U_k$ with
$(C_\theta,G_\theta)$-dependent coefficients only, so $\|r_{i|k}\|_W^2$
is a convex quadratic in $U_k$ whose Hessian and gradient contributions
are determined entirely by $(C_\theta,G_\theta)$; the secondary terms
$\lambda_u\|u_{i|k}\|_2^2$ and
$\lambda_{\Delta u}\|u_{i|k}-u_{i-1|k}\|_2^2$ do not involve
$(C_\theta,G_\theta)$ at all. Collecting these pieces defines $Q(C,G)$
as a function of $(C,G)$ alone --- the plant, horizon, weights, and
bounds enter $Q$ as fixed parameters, not arguments --- and by
construction $Q(C_\theta,G_\theta)$ is exactly the QP
Section~\ref{sec:qp} poses for generator $\theta$.
\end{proof}

This is why Section~\ref{sec:generators}'s two implemented generators
are checkable rather than aspirational: each supplies a different
$(C_\theta,G_\theta)$ to the same template $Q$. Without linearization, a
nonlinear or stateful $f_\theta$ does not factor through a fixed $(C,G)$
pair and lies outside the theorem.

\begin{proposition}[Convexity for affine generators]
\label{prop:convexity}
If $W\succeq0$, $\lambda_u\ge0$, and $\lambda_{\Delta u}\ge0$, then the
finite-horizon problem in Section~\ref{sec:qp} is a convex quadratic
program. It is strictly convex if the assembled Hessian is positive
definite, including the common case $\lambda_u>0$.
\end{proposition}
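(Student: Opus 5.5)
The plan is to condense the horizon problem into a function of $U_k$ alone, show that every predicted quantity is affine in $U_k$, and then read off convexity from the fact that the cost is a nonnegatively weighted sum of squared norms of affine maps. The same affine-in-$U_k$ fact is what the proof of Theorem~\ref{thm:invariance} relies on.

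First, unroll the dynamics from the measured state $x_{0|k}=x_k$. By induction,
\[
x_{i|k}=A^i x_k+\sum_{j=0}^{i-1}A^{i-1-j}B\bigl(u_{j|k}+\hat F_{h,j|k}\bigr),
\]
so stacking gives $X_k=\Phi x_k+\Gamma U_k+\Gamma_F\hat F_k$, with matrices depending only on $(A,B,N)$. The remaining predicted signals are then affine in $U_k$ as well:
\begin{itemize}
\item $a_{i|k}=(u_{i|k}+\hat F_{h,i|k})/m_r$ is affine in $U_k$.
\item $a^{\mathrm{id}}_{i|k}=C_\theta x_{i|k}+G_\theta\hat F_{h,i|k}$ is affine in $U_k$, because $C_\theta$ acts linearly on an affine function.
\item The residual $r_{i|k}=a_{i|k}-a^{\mathrm{id}}_{i|k}$ is therefore affine, say $r_{i|k}=R_iU_k+s_i$.
\item The rate differences $u_{i|k}-u_{i-1|k}$ are affine, since $u_{-1|k}$ is a fixed previously applied command.
\end{itemize}

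Second, examine the cost. For any $W\succeq0$ and affine $\ell(U)=RU+s$,
\[
\|\ell(U)\|_W^2=U^\top R^\top W R\,U+2s^\top W R\,U+s^\top W s,
\]
and $R^\top W R\succeq0$. The same holds with weights $\lambda_u I$ and $\lambda_{\Delta u}I$, which are positive semidefinite when $\lambda_u,\lambda_{\Delta u}\ge0$. Summing over $i$, the objective is $\tfrac12U_k^\top HU_k+g^\top U_k+c$ with $H\succeq0$, so it is a convex quadratic.

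Third, check the constraint set. The box constraints on $u$ and $\Delta u$ are affine inequalities in $U_k$. Because $p_{i+1|k}$ and $v_{i+1|k}$ are affine in $U_k$ by step one, $|p_{i+1|k}|_\infty\le p_{\max}$ and $|v_{i+1|k}|_\infty\le v_{\max}$ each split into pairs of affine inequalities. The feasible set is thus a polyhedron, and a convex quadratic minimized over a polyhedron is a convex QP.

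For strict convexity, $H\succ0$ gives it directly. In the case $\lambda_u>0$, the $\lambda_u$ term alone contributes $2\lambda_u I\succ0$ to $H$, and adding the other positive semidefinite terms keeps $H$ positive definite.

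The only step needing care is the bookkeeping in the first step: the induction with the force forecast, and the treatment of $u_{-1|k}$ as a parameter rather than a decision variable. The rest is standard.
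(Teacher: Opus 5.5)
Your proposal is correct and follows the same route as the paper's proof. Like the paper, you unroll the dynamics so that $x_{i|k}$, $a_{i|k}$, $a^{\mathrm{id}}_{i|k}$ and hence $r_{i|k}$ are affine in $U_k$, conclude that the cost is a convex quadratic under $W\succeq0$ and $\lambda_u,\lambda_{\Delta u}\ge0$, and note that every box constraint bounds an affine function of $U_k$, so the feasible set is a polyhedron. You also write out the assembled Hessian and verify the strict-convexity clause (the $\lambda_u$ term contributes $2\lambda_u I\succ0$), which the paper's proof leaves implicit.
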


\begin{proof}
The lifted state $x_{i|k}$ is affine in $U_k$ by the recursion
$x_{i+1|k}=Ax_{i|k}+B(u_{i|k}+\hat F_{h,i|k})$ unrolled from $x_{0|k}$.
Hence $a_{i|k}=(u_{i|k}+\hat F_{h,i|k})/m_r$ and
$a_{i|k}^{\mathrm{id}}=C_\theta x_{i|k}+G_\theta\hat F_{h,i|k}$ are
affine in $U_k$, so $r_{i|k}=a_{i|k}-a_{i|k}^{\mathrm{id}}$ is affine in
$U_k$ and $\|r_{i|k}\|_W^2$ is convex quadratic since $W\succeq0$. The
regularization terms $\lambda_u\|u_{i|k}\|_2^2$ and
$\lambda_{\Delta u}\|u_{i|k}-u_{i-1|k}\|_2^2$ are convex quadratic in
$U_k$ for $\lambda_u,\lambda_{\Delta u}\ge0$, and a sum of convex
functions is convex. Every listed constraint bounds an affine function
of $U_k$ in absolute value, so the feasible set is an intersection of
half-spaces, hence a convex polyhedron.
\end{proof}

\begin{proposition}[Exact unconstrained realization]
\label{prop:exact}
Assume additionally that $W\succ0$ (strictly positive definite --- a
stronger hypothesis than Proposition~\ref{prop:convexity}'s
$W\succeq0$). Suppose an input sequence exists for which $r_{i|k}=0$
over the horizon and no constraint is active. If the realization
residual is optimized lexicographically before secondary effort
objectives---or equivalently the secondary weights are zero---then every
optimal primary solution realizes the reference dynamics exactly.
\end{proposition}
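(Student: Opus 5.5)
The argument reduces exact realization to the fact that the primary cost is a sum of nonnegative terms that vanishes only at zero residual. Write the primary objective as $J_1(U_k)=\sum_{i=0}^{N-1}\|r_{i|k}\|_W^2$. Under the lexicographic interpretation, the first stage minimizes $J_1$ over the feasible polyhedron of Proposition~\ref{prop:convexity}. When the secondary weights are zero, the whole objective reduces to $J_1$. Either way, the primary optimal set is $\arg\min_{U_k\in\mathcal F}J_1$, where $\mathcal F$ is the feasible set. Any secondary stage then selects only within this set, so it is enough to show that every element of this set has $r_{i|k}=0$ for all $i$.

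First, since $W\succ0$, each term satisfies $\|r_{i|k}\|_W^2\ge0$, with equality iff $r_{i|k}=0$. Hence $J_1\ge0$ on $\mathcal F$.

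Second, take the hypothesized sequence $\bar U_k$, which has $r_{i|k}=0$ across the horizon and no active constraint. In particular $\bar U_k$ lies in $\mathcal F$, since an inactive constraint is in particular satisfied. Therefore $J_1(\bar U_k)=0$, and the primary optimal value is exactly $0$.

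Third, let $U_k^\star$ be any primary optimum. Then $J_1(U_k^\star)=0$. Each summand is nonnegative, so each summand vanishes. By positive definiteness of $W$ this gives $r_{i|k}(U_k^\star)=0$ for every $i$. So the predicted acceleration equals $a_{i|k}^{\mathrm{id}}$ along the horizon. In particular the executed first step realizes the reference dynamics exactly under the model, and for the planar plant $r^{\rm mod}=0$. The argument does not need the optimum to be unique, which matters because with zero secondary weights the Hessian may be singular, so Proposition~\ref{prop:convexity} need not give strict convexity.

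The delicate step is interpretive rather than technical. One must make precise that ``no constraint is active'' is used only to ensure feasibility of $\bar U_k$, and that the conclusion concerns the residual along the model prediction, not the plant. I would also stress where $W\succ0$ enters. With only $W\succeq0$, $J_1=0$ would force $r_{i|k}$ into $\ker W$ rather than to zero, which explains the strengthened hypothesis. Finally I would note that with positive secondary weights the optimum generally trades residual for effort, so the claim can fail, and this is exactly the $r^{\rm reg}$ term in (\ref{eq:residual-decomposition}).
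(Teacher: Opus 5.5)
Your proof is correct and follows the paper's own argument: the hypothesized feasible sequence makes the primary optimal value zero, so every primary optimum attains zero and each nonnegative summand vanishes, and $W\succ0$ then forces $r_{i|k}=0$. Your remark that only $r_{i|k}\in\ker W$ would follow under $W\succeq0$ also matches the paper, and your added points (uniqueness is not needed, and ``no constraint is active'' is used only for feasibility) are accurate refinements.
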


\begin{proof}
The primary objective $\sum_i\|r_{i|k}\|_W^2$ is nonnegative and attains
the value zero at the assumed feasible sequence, so its minimum over the
horizon is zero. Under lexicographic priority (or zero secondary
weights), any optimal solution must attain this minimum, so
$\sum_i\|r_{i|k}\|_W^2=0$, and since each term is nonnegative,
$\|r_{i|k}\|_W^2=0$ for every $i$. Because $W\succ0$, a quadratic form
$\|r\|_W^2$ vanishes only at $r=0$; if $W$ were only positive
semi-definite, this step would give $r_{i|k}\in\ker W$ rather than
$r_{i|k}=0$, which is why the stronger hypothesis is needed here but not
in Proposition~\ref{prop:convexity}.
\end{proof}

With finite (not zero) secondary weights, predictive realization's
measured residual is not exactly zero even during unconstrained
intervals (Table~\ref{tab:planar}), consistent with
Proposition~\ref{prop:exact}'s zero-weight idealization. The reactive
comparator instead solves a different, unconstrained algebraic
inversion of the generator's instantaneous law, so its residual sits at
numerical-solver tolerance (Table~\ref{tab:planar}) --- a check that the
underlying generator and dynamics model are implemented correctly, not
itself an instance of Proposition~\ref{prop:exact}.

\begin{proposition}[One-step constraint inheritance]
\label{prop:inheritance}
Assume the model and current human force are exact over the executed
sample and the QP is feasible. Then $u_{0|k}^\star$, $p_{1|k}$, and
$v_{1|k}$ satisfy their corresponding QP bounds. Re-solving at every
sample yields constraint satisfaction by induction while feasibility is
retained.
\end{proposition}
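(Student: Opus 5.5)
The plan is to read the statement as two claims, a one-sample claim and an inductive extension, and to prove each directly from the constraint list of Section~\ref{sec:qp}.

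\emph{One-sample claim.} If the QP at time $k$ is feasible, its optimizer $U_k^\star$ exists: by Proposition~\ref{prop:convexity} the objective is a convex quadratic and the feasible set is a nonempty closed polyhedron. Existence can be secured by strict convexity when $\lambda_u>0$, or more generally because the box constraint $|u_{i|k}|_\infty\le u_{\max}$ makes the feasible set compact.
\begin{enumerate}
\item Membership in the feasible set already gives $|u_{0|k}^\star|_\infty\le u_{\max}$ and $|u_{0|k}^\star-u_{-1|k}|_\infty\le\dot u_{\max}\Delta t$. Here $u_{-1|k}$ is, by construction, the command actually applied at the previous sample, so the rate bound is a statement about the real command sequence.
\item It also gives $|p_{1|k}|_\infty\le p_{\max}$ and $|v_{1|k}|_\infty\le v_{\max}$ for the predicted state $x_{1|k}=Ax_{0|k}+B(u_{0|k}^\star+\hat F_{h,0|k})$.
\end{enumerate}
To turn these predicted bounds into bounds on the physical state, I will use the exactness hypotheses. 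The initial condition satisfies $x_{0|k}=x_k$. Under the exact zero-order-hold model with exact current force, the applied force on $[k,k+1)$ equals $\hat F_{h,0|k}=F_{h,k}$, so $x_{k+1}=Ax_k+B(u_{0|k}^\star+F_{h,k})=x_{1|k}$. The physical $p_{k+1}$ and $v_{k+1}$ therefore satisfy the bounds.

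\emph{Inductive extension.} I will induct on $k$. The hypothesis at step $k$ is that every executed command $u_{0|j}^\star$ and every state $x_{j+1}$, for $j<k$, satisfies its bound, and that the problem at $k$ is feasible. The one-sample argument then handles step $k$. The carried-over $u_{-1|k+1}=u_{0|k}^\star$ keeps the rate constraint at the next solve referenced to the true previous command. The induction proceeds for as long as feasibility is retained, which the statement assumes rather than proves.

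\emph{Main obstacle.} The real difficulty is delimiting what is \emph{not} claimed. The proposition does not give recursive feasibility, since no terminal set or invariance argument is supplied, so the induction must be explicitly conditional on feasibility at every $k$. I would also note two points:
\begin{enumerate}
\item The bounds apply only at sample instants. Inter-sample positions are not covered, although under zero-order hold $v$ is affine between samples, so the velocity box holds in between by convexity.
\item The human force must be exactly $F_{h,k}$ over the sample. For the FR3 frozen-model case this assumption fails, and the $r^{\rm mod}$ term becomes the relevant discrepancy.
\end{enumerate}
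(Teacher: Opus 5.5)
Your proof is correct and follows the reasoning the paper relies on. The paper gives no separate proof: it treats the result as immediate from feasibility of the first-step rows plus exactness of the model and held force (so that $x_{k+1}=x_{1|k}$). Its following paragraph makes the same point you do, that the induction is conditional on retained feasibility and is not a recursive-feasibility result.

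One small refinement to your side remark: the inter-sample velocity claim also needs $v_k$ to lie in the box at the start of the sample. Your induction supplies this for $k\ge1$, but not at $k=0$, because $x_{0|k}$ itself is not constrained by the QP.
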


This proposition is not a recursive-feasibility theorem. A deployable
version requires a terminal invariant set, soft constraints with a
quantified fallback, or a backup safe controller. Sudden force changes
inside one sample and model error must also be covered by robust
constraint tightening.

\section{Planar Architecture Validation}
\label{sec:planar}

Section~\ref{sec:properties} establishes what the QP guarantees in
principle --- convexity, exact realization when unconstrained, one-step
constraint satisfaction, and what it does not guarantee (recursive
feasibility). This section tests those properties empirically on the
planar plant of Section~\ref{sec:interface}, comparing predictive
realization against the reactive comparator under identical command and
rate limits.

\subsection{Reproducible Setup}

The simulation uses a 2.5~kg planar point mass, $\Delta t=0.02$~s, and a
20-step (0.4~s) horizon. A smooth 12~N force is applied along $+y$ from
1~s to 3~s. Limits are
\begin{equation}
|u_j|\le18\text{ N},\qquad |v_j|\le0.22\text{ m/s},\qquad
|p_j|\le0.10\text{ m},
\end{equation}
with $|\dot u_j|\le180$~N/s. The realization, force, and force-rate
weights are $1$, $2\times10^{-4}$, and $10^{-3}$, respectively.

The impedance generator (Section~\ref{sec:generators}) uses
\begin{equation}
M_d=2.0\text{ kg},\quad D_d=18\text{ Ns/m},\quad K_d=45\text{ N/m}.
\end{equation}
The admittance generator (Section~\ref{sec:generators}) uses
$T_a=0.25$~s and $Y=0.025$~m/(Ns).

The comparator computes the instantaneous reference acceleration,
converts it to robot force, and applies the same 18~N force and
180~N/s slew limits. It has no prediction of position or speed
constraints. This comparator is intentionally minimal: it isolates what
horizon-wide state constraints change; it is not presented as a
survey-complete baseline.

\subsection{Results}
\label{sec:planar-results}

\begin{figure}[t]
\centering
\includegraphics[width=0.95\linewidth]{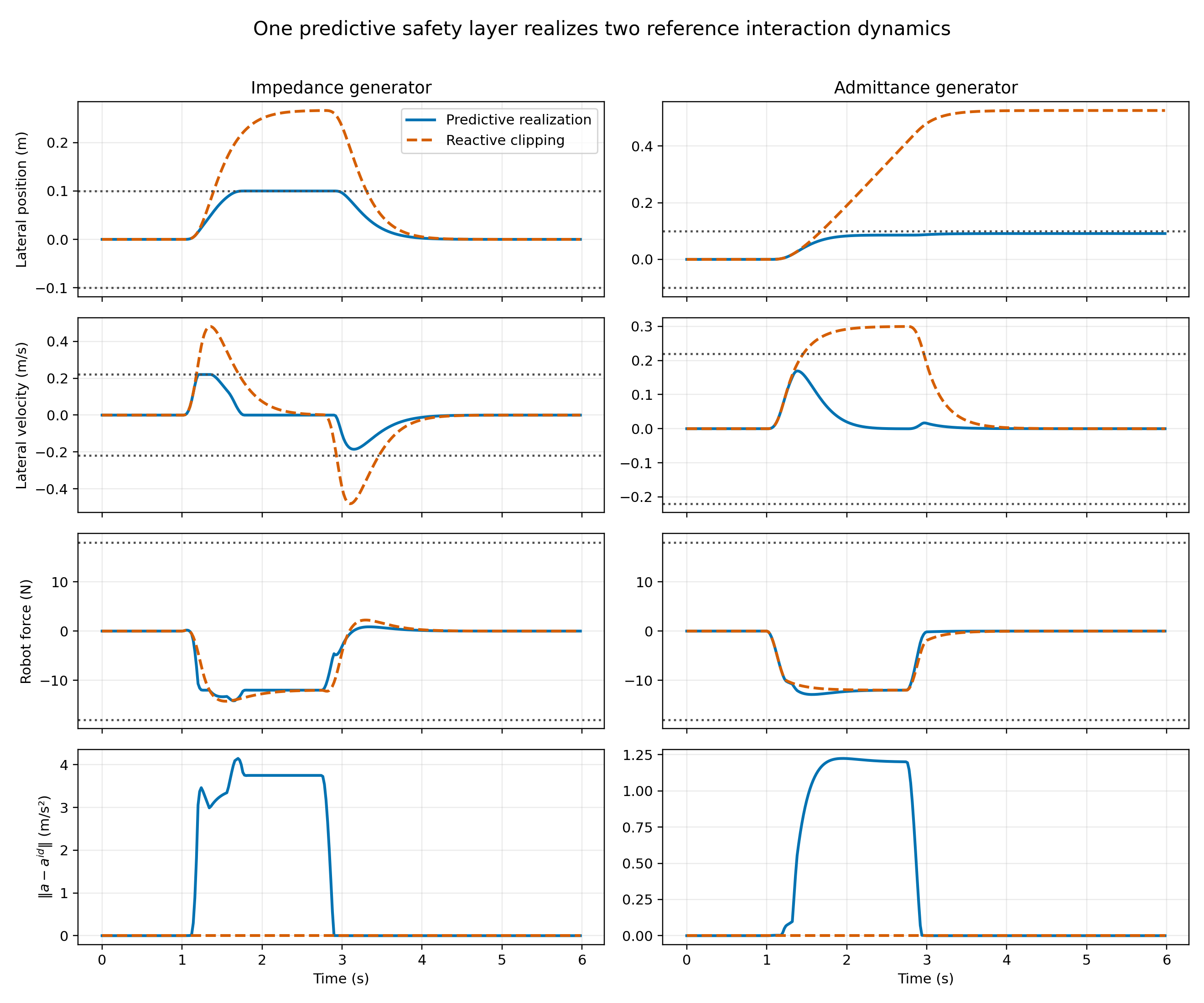}
\caption{Planar validation: impedance (left) and admittance (right).
Dotted lines are state limits. Reactive clipping tracks the behavior
but ignores the limits; predictive realization departs from it near the
bound (final row).}
\label{fig:planar}
\end{figure}

\begin{table*}[t]
\centering
\caption{Planar validation.}
\label{tab:planar}
\small
\begin{tabular}{@{}llrrrrl@{}}
\toprule
Generator & Controller & RMSE$^{*}$ & Peak $|p_j|$ & Peak spd. & Peak $|u_j|$ & Violation \\
& & (m/s$^2$) & (m) & (m/s) & (N) & \\
\midrule
Impedance & Predictive & 1.351 & 0.100002 & 0.220 & 14.16 & $2\times10^{-6}$~m \\
Impedance & Reactive & $<10^{-6}$ & 0.2663 & 0.480 & 14.24 & 0.1663~m, 0.260~m/s \\
Admittance & Predictive & 0.405 & 0.0912 & 0.169 & 12.88 & none \\
Admittance & Reactive & $<10^{-6}$ & 0.5250 & 0.300 & 12.00 & 0.4250~m, 0.080~m/s \\
\bottomrule
\end{tabular}

\vspace{2pt}
\raggedright\footnotesize
$^{*}$Realization RMSE, componentwise. State-limit violation reports
the amount by which the 0.10~m~/~0.22~m/s workspace/speed bound is
exceeded. Realization RMSE for the reactive comparator is at
numerical-solver tolerance because it directly inverts the generator's
instantaneous law; predictive realization's nonzero residual reflects
finite secondary weights and constraint-driven deviation, aggregated
over the whole run (Proposition~\ref{prop:exact},
Section~\ref{sec:properties}). As in Table~\ref{tab:fr3}, RMSE is
\emph{component-wise}: all residual components and time samples are
pooled into one mean before the square root.
\end{table*}

\begin{table}[t]
\centering
\caption{Planar residual decomposition (m/s$^2$).}
\label{tab:planar-decomposition}
\small
\begin{tabular}{@{}lrrrr@{}}
\toprule
Generator & Total & $r^{\rm reg}$ & $r^{\rm con}$ & Closure \\
\midrule
Impedance  & 1.351 & 0.0113 & 1.339 & $<9\!\times\!10^{-16}$ \\
Admittance & 0.405 & 0.00261 & 0.403 & $<3\!\times\!10^{-16}$ \\
\bottomrule
\end{tabular}

\vspace{2pt}
\raggedright\footnotesize
The first three numerical columns are component-wise RMSE and need not
add as scalars; (\ref{eq:residual-decomposition}) closes samplewise.
``Closure'' is its maximum absolute component error. The planar model is
exact, so $r^{\rm mod}=0$.
\end{table}

The impedance reference has a static displacement
$F_h/K_d=12/45=0.267$~m, matching the 0.266~m simulated peak after the
smooth force ramp. Because this behavior is incompatible with the
0.10~m workspace, the predictive controller increases the realization
residual while the bound is active. After release, it returns to the
impedance equilibrium at the origin.

The admittance reference requests approximately 0.30~m/s under 12~N and
has no restoring spring. Reactive realization therefore accumulates
0.525~m displacement and retains it after release. The constrained
controller begins departing from the requested velocity dynamics before
reaching the workspace boundary and settles at 0.091~m. The generator
remains unchanged. Table~\ref{tab:planar-decomposition} shows that the
observed modification is predominantly constraint intervention rather
than secondary regularization; the stronger ``entirely
constraint-induced'' wording would be false because the regularization
term is small but nonzero.

\subsection{Online Generator Switching}
\label{sec:switching}

The preceding two subsections validate the generator interface by
running two separate simulations, one per generator --- informative, but
not yet a demonstration that a single running controller accepts a
different generator without redesign. This section tests that directly.
One \texttt{InteractionDynamicsMPC} instance is constructed once, with
the impedance generator; a small constant lateral force (1~N, well
inside every bound) is applied for the entire 6~s run, with no release.
At $t=2$~s and $t=4$~s, exactly one attribute is reassigned on the live
controller --- \texttt{controller.generator} --- switching impedance
$\rightarrow$ admittance $\rightarrow$ impedance; nothing else (the QP
weights, constraints, decision variable, or the
\texttt{previous\_command} state carried between solves) is touched or
reconstructed.

\begin{figure}[t]
\centering
\includegraphics[width=0.9\linewidth]{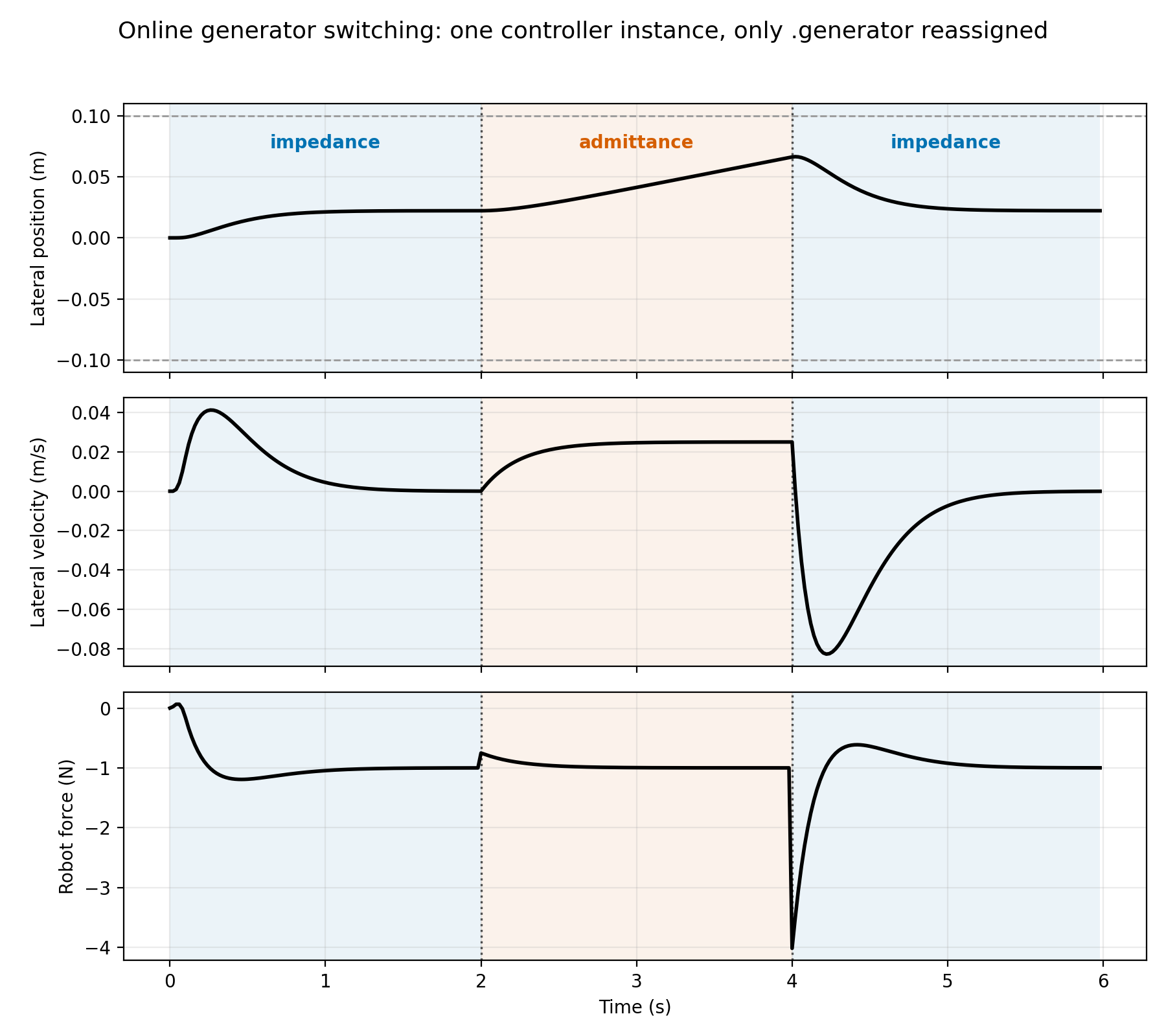}
\caption{One controller instance; only \texttt{.generator} is
reassigned at $t=2$~s and $t=4$~s. Position converges to the impedance
equilibrium in both impedance segments, drifts under admittance in
between, and re-converges after re-entry from a different state.}
\label{fig:switching}
\end{figure}

Both switches are rate-bounded: the largest command-force jump at a
switch instant is 3.02~N at $t=4$~s, 84\% of the QP's 3.6~N per-tick
limit. This is a material command change, not evidence of mathematical
smoothness; it simply satisfies the same discrete rate constraint that
bounds every other step. Position and speed stay within 0.067~m and
0.083~m of the origin throughout, well inside the 0.10~m~/~0.22~m/s
bounds. The experiment therefore validates the software-level generator
interface and feasibility of an online swap under the existing rate
constraint; it does not establish optimal switching, hybrid stability, or
continuity of the generator law. The second impedance segment
additionally shows re-convergence from the different position and
nonzero velocity left by the admittance segment without special-casing
on re-entry.

\subsection{What the Experiments Establish}

The results support four limited claims:
\begin{enumerate}
\item two qualitatively different affine interaction models use the same
constrained predictive implementation;
\item a constrained/unconstrained counterfactual separates constraint
intervention from the small regularization component of total behavior
error;
\item horizon-wide state constraints prevent violations that command
clipping alone cannot prevent in this deterministic model;
\item a single controller instance accepts a different generator online,
mid-run, by reassigning one attribute, with its command change bounded
by the existing rate constraint.
\end{enumerate}
They do not establish manipulator-level feasibility, coupled
human--robot stability, passivity, robustness to force-estimation delay,
or real-time performance on embedded hardware.

\section{FR3 Runtime Validation}
\label{sec:fr3}

Section~\ref{sec:planar} isolates behavior--realization separation from
kinematic and inverse-dynamics confounds by using a point mass. This
section deploys the same interface on a torque-controlled 7-DOF Franka
FR3 simulated in MuJoCo, replacing the exact double-integrator plant
with the nonlinear, configuration-dependent manipulator dynamics of
Section~\ref{sec:interface}.

\subsection{Architecture}
\label{sec:fr3-architecture}

At each solve, let $J_v(q)\in\mathbb R^{3\times7}$ be the translational
Jacobian and $\Lambda(q)^{-1}=J_vM(q)^{-1}J_v^\top$ the (regularized)
inverse operational-space inertia for the position coordinate.

\noindent\textbf{Full closed-loop law.} Let $R_d$ be the held reference
orientation, $e_R(q)\in\mathbb R^3$ the corresponding orientation error,
$J_w(q)\in\mathbb R^{3\times7}$ the rotational Jacobian, and $q_0$ a
fixed neutral configuration. The commanded joint torque is assembled
from a feedforward term $\tau_{\mathrm{ff}}$, a raw orientation-hold
term $\tau_{\mathrm{orient}}$, a raw null-space centering term
$\tau_{\mathrm{null}}$, and their projection $\tau_{\mathrm{aux}}$
through the null-space projector $\bar N_v$:
\begin{equation}
\begin{aligned}
\tau_{\mathrm{ff}}(q,\dot q)&=C(q,\dot q)\dot q+g(q),\\
\tau_{\mathrm{orient}}(q,\dot q)&=J_w(q)^\top\bigl(-K_{\mathrm{rot}}e_R(q)-D_{\mathrm{rot}}\omega\bigr),\\
\tau_{\mathrm{null}}(q,\dot q)&=-k_{\mathrm{null}}(q-q_0)-d_{\mathrm{null}}\dot q,\\
\bar N_v(q)&=I_7-M(q)^{-1}J_v(q)^\top\Lambda(q)\,J_v(q),\\
\tau_{\mathrm{aux}}(q,\dot q)&=\bar N_v(q)^\top\bigl[\tau_{\mathrm{orient}}(q,\dot q)+\tau_{\mathrm{null}}(q,\dot q)\bigr],\\
\tau_{\mathrm{base}}(q,\dot q)&=\tau_{\mathrm{ff}}(q,\dot q)+\tau_{\mathrm{aux}}(q,\dot q),
\end{aligned}
\end{equation}
and the executed torque is
\begin{equation}
\tau=\tau_{\mathrm{base}}(q,\dot q)+J_v(q)^\top F_{\mathrm{cmd}}.
\end{equation}
Task acceleration obeys $\ddot y=J_v\ddot q+\dot J_v\dot q$. Consequently,
define
\begin{equation}
d_{\mathrm{known}}(q,\dot q) =J_v(q)M(q)^{-1}\tau_{\mathrm{aux}}(q,\dot q)
+\dot J_v(q,\dot q)\dot q,
\end{equation}
which contains both the projected auxiliary-torque contribution and the
task-kinematic term. The residual translational dynamics used by the QP
are
\begin{equation}
\ddot e=\Lambda(q)^{-1}\bigl(F_{\mathrm{cmd}}+F_h\bigr)+d_{\mathrm{known}},
\end{equation}
where the realized acceleration is \emph{positive} in the commanded
force, matching the point-mass convention of Section~\ref{sec:interface}.
$F_{\mathrm{cmd}}\in\mathbb R^3$, the QP's Cartesian correction force, is
supplied by whichever controller is active. The predictive controller
sets $F_{\mathrm{cmd}}=F_{0|k}^\star$, the first element of the
receding-horizon solution (Section~\ref{sec:qp}, condensed later in this
section). The reactive comparator instead sets
\begin{equation}
F_{\mathrm{cmd}}=\operatorname{clip}\Bigl(\Lambda(q)\bigl[a^{\mathrm{id}}(x,F_h)-d_{\mathrm{known}}\bigr]-F_h\Bigr),
\end{equation}
rate- and magnitude-limited to the same bounds as the predictive
controller, with $a^{\mathrm{id}}=C_\theta x+G_\theta F_h$ the
instantaneous desired behavioral acceleration
(Section~\ref{sec:generators}). Both controllers share the identical
$\tau_{\mathrm{base}}$ and torque-assembly step; they differ only in how
$F_{\mathrm{cmd}}$ is chosen. Thus the manipulator realization map
changes, but the generator interface does not.

$\bar N_v(q)$ is a dynamically consistent projector for the translation
task~\cite{ref9}. Projecting both the orientation and null-space
auxiliary torques through $\bar N_v(q)$, rather than leaving either
unprojected, prevents them from leaking into $\ddot e$ and inflating
joint drift and the compensating Cartesian command. Because $\Lambda$
uses Tikhonov regularization, $J_vM^{-1}\bar N_v^\top=0$ holds only
approximately; the remaining leakage and $\dot J_v\dot q$ are therefore
retained in $d_{\mathrm{known}}$.

A six-second gain sweep over all four conditions
(\texttt{sweep\_null\_space\_gains.py}) exposed the remaining slow
drift. With $k_{\mathrm{null}}=10,d_{\mathrm{null}}=2$, configuration
deviation reaches 1.32--1.85~rad and the reactive impedance condition
exceeds a joint limit by about 1.6~N$\cdot$m. Gains 40/8 are the
gentlest tested values that remove this violation and keep deviation
below 0.29~rad; they are used in the benchmark. They also alter reactive
task-space peaks (admittance 0.474~m to 0.107~m; impedance 0.158~m to
0.104~m), whereas predictive peaks stay near the 0.06~m workspace bound.
Gains 100/20 reduce drift below 0.19~rad but suppress admittance
further, including its predictive peak to 0.044~m. The selected gains
therefore control drift but are not claimed to leave task-space behavior
unchanged.

The realization QP freezes $\Lambda(q)^{-1}$, $d_{\mathrm{known}}(q,\dot q)$,
$\tau_{\mathrm{base}}(q,\dot q)$, and $J_v(q)$ at the current solve and
holds them across the horizon. In compact form, its decision vector
contains the Cartesian command sequence and nonnegative state slacks,
\begin{equation}
\begin{aligned}
\min_{F_i,s^p_i,s^v_i} & \sum_{i=0}^{N-1} \Bigl(
\|\ddot e_i-a_i^{\mathrm{id}}\|_W^2+\lambda_F\|F_i\|_2^2\\
&+\lambda_{\Delta F}\|F_i-F_{i-1}\|_2^2
+\rho\bigl((s^p_i)^2+(s^v_i)^2\bigr) \Bigr),
\end{aligned}
\end{equation}
\begin{equation}
\begin{aligned}
\ddot e_i&=\Lambda_k^{-1}(F_i+\hat F_{h,i})+d_{\mathrm{known},k},\\
|F_i|&\le F_{\max},\qquad |F_i-F_{i-1}|\le\Delta F_{\max},\\
|\tau_{\mathrm{base},k}+J_{v,k}^{\top}F_i|&\le\tau_{\max},\\
|e_i|&\le e_{\max}+s^p_i,\\
|\dot e_i|&\le v_{\max}+s^v_i,\qquad s^p_i,s^v_i\ge0.
\end{aligned}
\end{equation}

This per-solve model is a convex QP but only a local predictor of the
nonlinear MuJoCo plant. The hard torque constraint is imposed at every
predicted step, rather than only $i=0$. It guarantees feasibility of the
frozen-model command sequence returned by a successful solve; it is not
a horizon-wide guarantee for the future nonlinear trajectory. Executed
torque is therefore recomputed and monitored at 1~kHz.

The torque bound applies to the \emph{total frozen-model torque}
$\tau_{\mathrm{base}}+J_v^\top F_i$, because the base term can consume
actuator budget before the interaction correction is added. A successful
solve therefore satisfies the frozen-model limit by construction. If the
desired behavioral acceleration requires more torque, the QP returns the
closest feasible command and exposes the trade-off through $r_k$.

The workspace and speed boxes use nonnegative slacks with $\rho=10^8$,
because the frozen-Jacobian model cannot guarantee recursive feasibility
for the nonlinear plant. Nontrivial slack occurs in 110 impedance and
186 admittance solves, with peak position slack of 0.077~mm (impedance)
and 0.027~mm (admittance) and peak speed slack below $10^{-6}$~m/s in
both conditions. This slack is smaller than Table~\ref{tab:fr3}'s
executed peak $|e_z|$ violation (0.0001~m impedance, 0.0001~m
admittance) because the two measure different objects: slack is the
QP's own relaxation of its frozen-model prediction, while the executed
violation is measured against the real nonlinear MuJoCo trajectory,
which the frozen model only approximates. The separate reactive
fallback for solver infeasibility is implemented but is not exercised:
every reported solve is feasible.

The fast control loop recomputes $\tau_{\mathrm{base}}$, the orientation
term, and the null-space term from the current $(q,\dot q)$ at 1~kHz;
the QP is re-solved only every 20~ms (a 50~Hz outer loop), holding
$F_{\mathrm{cmd}}$ between solves. Caching the full joint torque at the
outer-loop rate, rather than only the slow correction term, would
confound any comparison of predictive against reactive control at a
common inner-loop rate.

\subsection{Benchmark Scenario}

Mirroring the planar prototype's own story rather than a separate
benchmark design: the EE holds a fixed nominal pose and orientation
while a sustained human push toward a workspace/speed boundary is
applied, and predictive realization is compared against a command/rate-limited
reactive controller with no predictive workspace handling, under an identical
command/rate box. Neither controller has
oracle knowledge of the scripted force; both receive a zero-order-hold
forecast of the currently measured force, as in Section~\ref{sec:planar}.

The push is a 20~N force along $-z$ with a raised-cosine ramp from
$t=1.0$~s to $1.25$~s, a hold to $t=3.25$~s, and a symmetric ramp-down to
$t=3.5$~s. It is applied physically to the MuJoCo end-effector body, not
only supplied to the controller's model.

The 2~s hold gives the admittance velocity time to approach steady state
($T_a=0.3$~s), which is needed for the reactive comparator to approach
the workspace bound and so yield a predictive-versus-reactive contrast
for admittance. The impedance peak, determined by its equilibrium, is
reached within 0.5~s regardless of hold length.

The workspace/speed bound is
\begin{equation}
|e_z|\le0.06\text{ m},\qquad |v|\le0.35\text{ m/s},
\end{equation}
the QP horizon is 15 steps (0.3~s) at $\Delta t=0.02$~s, and per-joint
torque limits match the FR3 ($\pm87$~N$\cdot$m for joints 1--4,
$\pm12$~N$\cdot$m for joints 5--7).

The impedance generator (Section~\ref{sec:generators}) uses $M_d=2.0$~kg,
$D_d=28$~Ns/m, $K_d=200$~N/m. The admittance generator
(Section~\ref{sec:generators}) uses $T_a=0.3$~s and $Y=0.01$~m/(Ns).
Each condition runs for 6~s of simulated time.

\subsection{Results}
\label{sec:fr3-results}

\begin{figure}[t]
\centering
\includegraphics[width=0.95\linewidth]{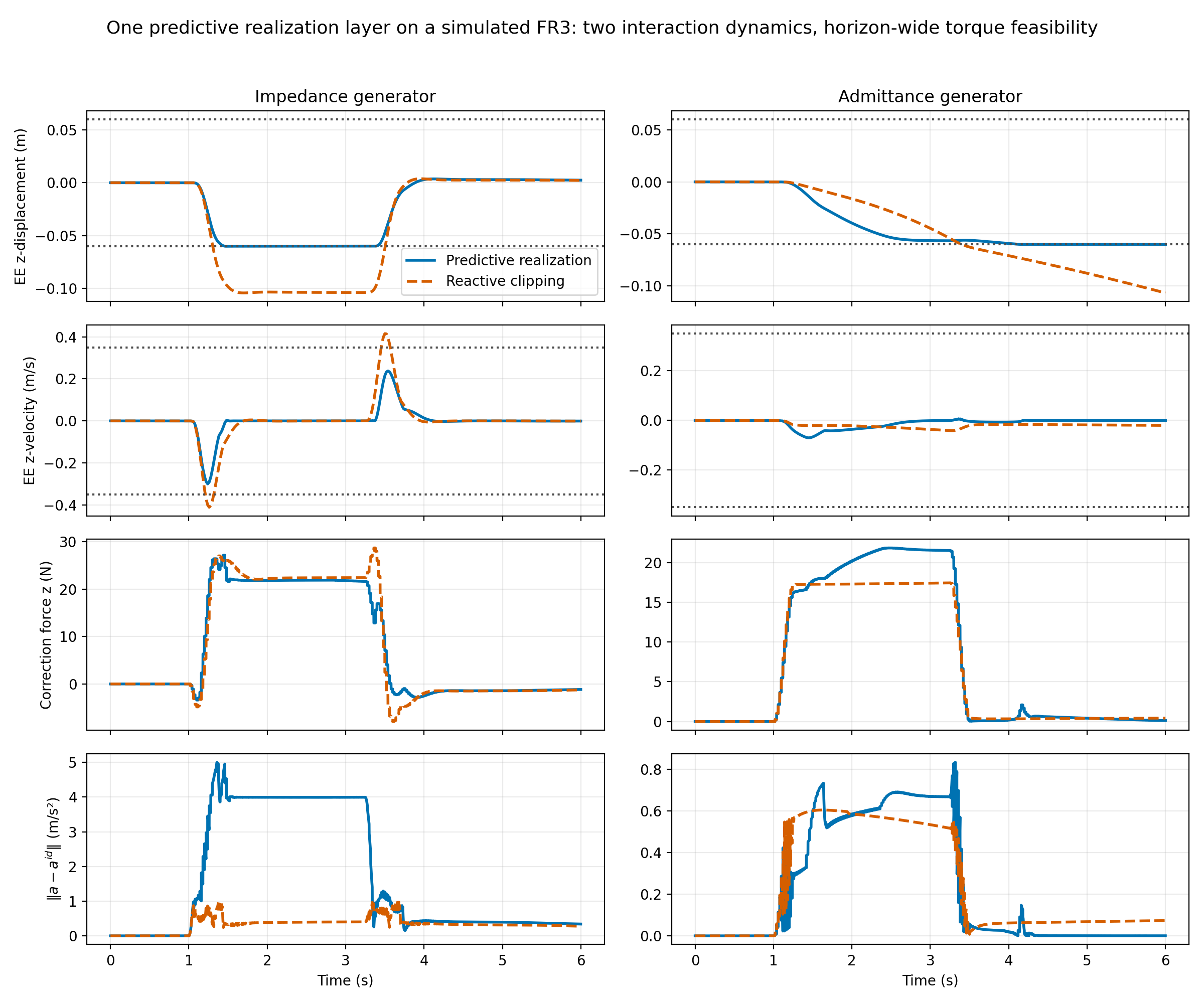}
\caption{FR3 benchmark: impedance (left) and admittance (right),
predictive realization (solid) vs.\ the command/rate-limited reactive
controller without predictive workspace handling (dashed). Dotted
lines are workspace/speed bounds; bottom row is the empirical residual;
torque does not clip in this primary benchmark. Predictive realization
tracks the bound; the reactive controller overshoots
and, for admittance, never recovers.}
\label{fig:fr3}
\end{figure}

\begin{table*}[t]
\centering
\caption{FR3 benchmark.}
\label{tab:fr3}
\small
\begin{tabular}{@{}llrrrrl@{}}
\toprule
Generator & Controller & RMSE$^{*}$ & Peak $|e_z|$ & Peak spd. & Torque util. & Violation \\
& & (m/s$^2$) & (m) & (m/s) & & \\
\midrule
Impedance & Predictive & 1.39 / 1.45 & 0.0601 & 0.300 & 37.0\% & 0.0001~m \\
Impedance & Reactive & 0.223 / 0.062 & 0.1044 & 0.415 & 37.2\% & 0.0444~m, 0.065~m/s \\
Admittance & Predictive & 0.212 / 0.289 & 0.0601 & 0.070 & 34.4\% & 0.0001~m \\
Admittance & Reactive & 0.200 / 0.034 & 0.1066 & 0.041 & 31.7\% & 0.0466~m \\
\bottomrule
\end{tabular}

\vspace{2pt}
\raggedright\footnotesize
$^{*}$Residual RMSE, empirical / predicted. The \emph{predicted}
residual uses the frozen local model of Section~\ref{sec:fr3-architecture}.
The \emph{empirical} residual instead finite-differences the 1~kHz
MuJoCo end-effector velocity and compares that acceleration with the
generator request; it is the appropriate measure of behavior delivered
by the simulated nonlinear plant. Their gap measures local-model and
differentiation error, so the predicted residual must not be interpreted
as an exact plant residual. As in Table~\ref{tab:planar}, both use
component-wise RMSE: all three residual components and all time samples
are pooled into one mean before the square root. This harmonizes the
aggregation convention but does not make cross-plant values direct
performance rankings. Speed is reported component-wise to match the box
constraint. Maximum torque utilization is the largest ratio
$|\tau_j|/\tau_{\max,j}$, which avoids comparing unlike 87~N$\cdot$m and
12~N$\cdot$m joint limits. Predictive solve times are reported in the
text below because the reactive comparator does not solve a QP.
\end{table*}

\begin{table*}[t]
\centering
\caption{FR3 manager-update residual audit (component-wise RMSE, m/s$^2$).}
\label{tab:fr3-decomposition}
\small
\begin{tabular}{@{}lrrrrrr@{}}
\toprule
Generator & Model total & $r^{\rm reg}$ & $r^{\rm con}$ & $r^{\rm mod}$ & Empirical total & Model closure \\
\midrule
Impedance  & 1.448 & 0.285 & 1.205 & 0.199 & 1.395 & $<5\!\times\!10^{-16}$ \\
Admittance & 0.283 & 0.0958 & 0.356 & 0.153 & 0.211 & $<3\!\times\!10^{-16}$ \\
\bottomrule
\end{tabular}

\vspace{2pt}
\raggedright\footnotesize
All columns use the 50~Hz manager-update instants; RMSE columns need not
add because the vector terms can cancel. Model closure is the maximum
absolute component error in $r^{\rm model}=r^{\rm reg}+r^{\rm con}$.
$r^{\rm mod}$ then closes (\ref{eq:residual-decomposition}) against the
finite-difference plant acceleration.
\end{table*}

No condition violates a per-joint torque limit at the executed sample,
no solve is reported infeasible, and the torque constraint never binds
in this primary benchmark: maximum utilization stays below 38\%. This
benchmark therefore evaluates behavior realization at a workspace
boundary, not torque intervention. Section~\ref{sec:torque-activation}
introduces a deliberately derated torque-budget stress case in which the
constraint activates.

An end-to-end unit test also tightens one joint limit until the
horizon-wide constraint activates.

The impedance reference has an unconstrained static displacement
$F_h/K_d=20/200=0.10$~m, which by itself already exceeds the 0.06~m
bound. Reactive clipping tracks toward that equilibrium and peaks at
0.104~m (slightly above the static value, from the same ramp-overshoot
effect as the planar case in Section~\ref{sec:planar-results}),
overshooting the bound by 4.4~cm. Predictive realization instead departs
from the desired behavioral acceleration while the bound is active,
holding displacement to 0.0601~m.

The admittance reference has no equilibrium displacement to compare
against, only a steady-state velocity $Y F_h=0.01\times20=0.2$~m/s while
the force is held, with no restoring term to pull it back after release.
Reactive clipping accumulates displacement through the hold and
continues briefly after force release before the velocity decays (time
constant $T_a=0.3$~s), peaking at 0.1066~m --- a 4.7~cm overshoot ---
and never recovers it, since the generator itself has no
position-restoring term (Section~\ref{sec:generators}) and the reactive
comparator has no lookahead to anticipate the boundary. Predictive
realization begins departing from the requested velocity before the
bound is reached and stays within 0.0601~m.

Table~\ref{tab:fr3-decomposition} prevents the raw FR3 residual from being
misread as constraint intervention alone. Constraint intervention is the
largest component for both generators at manager updates, but finite
regularization and plant/model terms are material, particularly for the
admittance condition where vector cancellation makes the empirical total
smaller than the individual component RMSE values.

The deployed QP uses the exact variable change $z=10^3s$ for each physical
state slack $s$. Constraint coefficients use $z/10^3$, the slack Hessian
weight becomes $\rho/10^6$, and reported values decode $s=z/10^3$; hence
the physical objective and feasible set are unchanged. A unit test verifies
this matrix transformation directly. It reduces the condensed Hessian
condition number from $4.3\times10^9$ to $4.34\times10^3$ for impedance and
from $3.8\times10^9$ to $3.77\times10^3$ for admittance. The deployed 50~Hz
configuration also warm-starts OSQP from the preceding primal/dual solution.

A dedicated timing study (\texttt{run\_fr3\_timing\_study.py}, five full
6~s repetitions and 1500 actual solves per generator/condition) then measures
the complete QP construction and solve. In the deployed warm-started
configuration, impedance has 3.45~ms mean, 6.89~ms 99th-percentile, and
8.67~ms maximum solve time; admittance has 3.68, 7.28, and 9.73~ms,
respectively. Thus all 3000 measured warm-started solves meet the 20~ms
manager period on the development machine. The matched cold-start ablation
has means of 4.24 and 4.02~ms, 99th percentiles of 7.25 and 8.33~ms, and
maxima of 17.73 and 14.92~ms; all 3000 cold solves also meet 20~ms in the
saved scaled-formulation study. Warm-starting is therefore retained for
additional margin, rather than being credited as the sole deadline fix.

These wall-clock measurements are evidence for the tested machine, not a hard
real-time or hardware-independent guarantee. They include QP construction
and OSQP but exclude the post-solve unconstrained counterfactual used only for
the residual audit. The simulator invokes the solve synchronously and does
not implement asynchronous late-result discard; deployment on a different
processor must re-run the timing test and, if overruns occur, hold the last
applied command and reject stale solutions explicitly.

\subsection{Torque-Active Runtime Intervention}
\label{sec:torque-activation}

The primary benchmark uses the nominal FR3 torque limits and does not
activate them. To exercise runtime intervention without changing the
behavior layer, we repeat the impedance condition with joint 4's
available budget deliberately derated from its nominal 87~N$\cdot$m to
31.5~N$\cdot$m. This is an artificial actuator-budget stress test, not a
claim about the FR3's physical rating. The 20~N force profile, impedance
behavior, 0.3~s prediction horizon, QP weights, workspace/speed bounds,
and 50~Hz solve rate remain unchanged.

Two predictive runtimes are compared. The proposed runtime constrains
total frozen-model torque at all 15 predicted steps. The ablation
constrains the same torque only at $i=0$; its QP dimensions, behavior
objective, state constraints, and command/rate bounds are otherwise
identical.

\begin{figure}[t]
\centering
\includegraphics[width=0.95\linewidth]{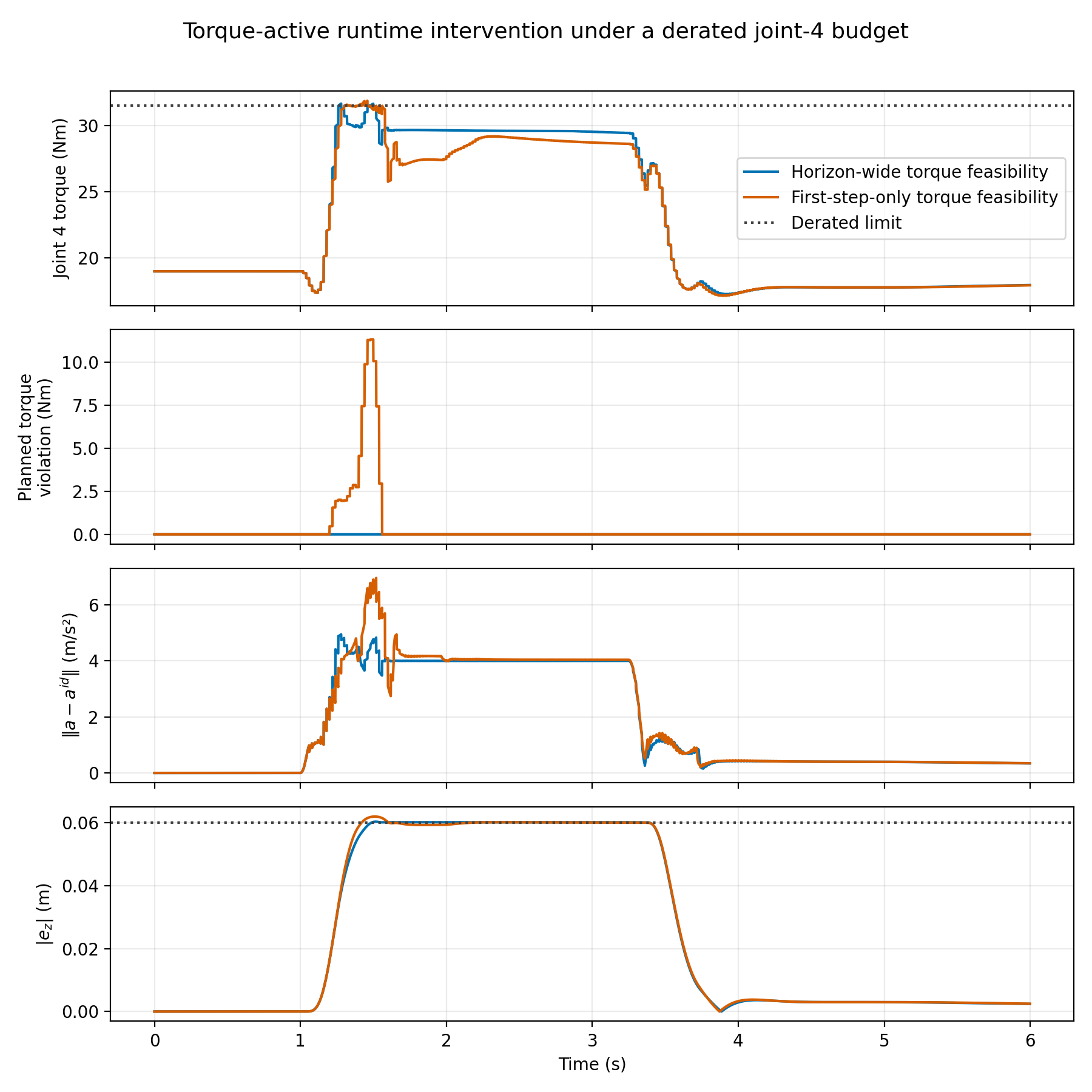}
\caption{Torque-active intervention under a derated joint-4 budget.
Horizon-wide enforcement (blue) keeps the plan feasible; the
first-step-only ablation (orange) satisfies only $i=0$ and plans an
infeasible future during the ramp. Dotted lines are the derated budget
and workspace bound.}
\label{fig:torque}
\end{figure}

\begin{table*}[t]
\centering
\caption{Torque-activation ablation.}
\label{tab:torque}
\small
\begin{tabular}{@{}lrrrr@{}}
\toprule
Enforcement & Planned & Executed & RMSE$^{*}$ & Peak \\
& viol. (N$\cdot$m) & viol. (N$\cdot$m) & (m/s$^2$) & $|e_z|$ (m) \\
\midrule
All 15 steps & 0.0002 & 0.161 & 1.398 & 0.0603 \\
First step only & 11.329 & 0.380 & 1.466 & 0.0619 \\
\bottomrule
\end{tabular}

\vspace{2pt}
\raggedright\footnotesize
$^{*}$Empirical residual RMSE. The planned violation is evaluated
against the frozen model used by each QP. The all-step value is
numerical solver tolerance, whereas the first-step-only runtime plans
future torque up to 11.329~N$\cdot$m beyond the derated budget.
Executed-sample torque is recomputed from the nonlinear MuJoCo state at
1~kHz; its smaller nonzero excess in both cases quantifies the
local-model gap rather than a claimed hard nonlinear-plant guarantee.
Horizon-wide enforcement reduces, but does not eliminate, that mismatch.
The residual uses the component-wise convention of
Tables~\ref{tab:planar} and~\ref{tab:fr3}.
\end{table*}

This experiment is not offered as another controller comparison. It
tests the architectural responsibility assigned to the realization
runtime: when a fixed behavior specification conflicts with an actuator
budget, the runtime changes the command, exposes the resulting
behavioral deviation, and avoids relying on an actuator-infeasible
internal plan. The behavior layer itself is unchanged.

\subsection{Scope and What Remains}
\label{sec:scope}

This is a focused architecture validation, not a complete manipulator
evaluation. The implemented behavior layers are impedance and admittance
only; the scenario is a sustained push rather than a sweep over delay,
sensor noise, or mass mismatch; and there is no collision constraint.
The command/rate-limited reactive controller is a mechanism ablation,
not a survey-complete baseline. The online behavior-layer switching demonstration
(Section~\ref{sec:switching}) is planar only; an FR3 counterpart remains
future work.

Human-participant and hardware validation remain future work, as do a
faithful matched implementation of a predictive safety filter,
predictive admittance controller, or MPIC baseline. Hardware timing and an
explicit stale-solution policy must also be revalidated for deployment even
though the warm-started development-machine study meets 20~ms.

\section{Why Behavior--Realization Separation?}
\label{sec:why}

Interaction behavior should specify only desired interaction dynamics;
physical feasibility should be realized independently by the robot.
Section~\ref{sec:intro} argued this from the different variables each side depends on;
Sections~\ref{sec:generators}--\ref{sec:fr3} tested it through one affine
executable instance. This section defends the separation directly,
against three natural alternatives.

One alternative is to optimize behavior and robot commands jointly.
Joint optimization is appropriate when behavior parameters are
themselves task decisions, but separation instead makes a specification
independently testable and replaceable while assigning feasibility to a
stable downstream contract, at the cost that a fixed behavior layer may
be suboptimal compared with a fully coupled design. This is an argument
for a useful boundary, not a universal dominance claim.

A second alternative is direct robot MPC or trajectory tracking, which
chooses commands to minimize a task objective or evaluates success
against a position/velocity reference. The runtime here instead receives
a law evaluated at the predicted interaction state and measures whether
the robot reproduces the requested response to contact, with no
trajectory generated; MPC is only the current mechanism, while the
architectural choice is what the optimizer receives and what discrepancy
it reports.

A third alternative is a reference governor, which modifies a reference
upstream of an already-designed fixed controller. The realization layer
here is instead itself the command optimizer, and its input is a
desired-acceleration law rather than a setpoint for a separate inner
loop. Both separate intent from constraint handling, so separation alone
is not the novelty; the distinction is the interface boundary, online
cross-behavior substitution, and the decomposition of
desired-versus-realized acceleration.

What separation buys, concretely, is controlled substitutability:
Section~\ref{sec:switching} changes the behavior layer while preserving
the realization object and its carried state, and
Section~\ref{sec:torque-activation} changes the actuator budget while
preserving the behavior --- complementary tests of the same boundary, in
which semantics can change without rebuilding feasibility logic, and
feasibility can intervene without rewriting parameters. 

\section{Limitations and Open Theory}
\label{sec:limitations}

The results above establish the architectural distinction on two plants,
not everything a mature behavior-realization framework needs. Five gaps
are worth stating plainly.

First, the implemented behavior class is affine and memoryless; a
stateful or learned model would need augmented prediction and explicit
treatment of model uncertainty.

Second, the realization cost does not itself imply passivity --- a
passive reference generator can be rendered non-passive by
constraint-induced deviation or secondary optimization. A dissipativity
constraint on the realized port variables, paired with an energy tank or
passivity observer, is a natural extension.

Third, hard state constraints can become infeasible after an unpredicted
impulse (Proposition~\ref{prop:inheritance} flags this as a gap, not a
guarantee). The FR3 study handles it operationally with slack-relaxed
workspace/speed boxes and a reactive fallback on solver infeasibility;
only the slack relaxation is exercised, and it stays small in the tested
scenario. This does not establish a terminal invariant set, quantify
soft-constraint risk, or validate the untriggered fallback --- a
deployable system would need all three.

Fourth, the total realization residual has physical units and depends on
the chosen output coordinates. Equation~(\ref{eq:residual-decomposition})
now separates regularization, joint constraint intervention, and model
error, but does not allocate intervention among individual constraint
rows or provide a normalized severity score. The FR3 study exercises translation only;
orientation is held by a fixed PD law, not a generator, so extending the
interface needs a principled or energy-weighted normalization across
coordinates. Raw RMSE across generators with different scales is not a
direct performance ranking, and Table~\ref{tab:fr3}'s empirical and
frozen-model residuals differ materially in several conditions ---
evidence of model-adequacy and estimation gaps, not a violation of
Theorem~\ref{thm:invariance}.

Finally, the architectural novelty must be assessed against the full
literature of Section~\ref{sec:related}, not only its closest
precedents. The strongest eventual claim rests on implemented
behavior-layer substitutability, formal residual guarantees, and matched
experiments, not terminology alone.

\section{Conclusion}

This paper's contribution is architectural, not another interaction
controller: desired-behavior specification and constrained robot
realization are different responsibilities of robot-control software,
made executable through a desired-acceleration interface and one
predictive realization runtime for its memoryless affine subclass. The
planar study swaps impedance for admittance and back while preserving
the running realization object and its command state. The FR3 study
instead changes the actuator budget while preserving the behavior layer:
horizon-wide torque enforcement modifies the command, exposes the
intervention through the constrained/unconstrained audit, and avoids the
infeasible future plan
a first-step-only ablation produces. Together these test both directions
of the software boundary. The evidence is a reproducible proof of
concept, scoped in Sections~\ref{sec:scope} and~\ref{sec:limitations};
future behavior layers may be physics-based, optimization-based, or
learned only once they expose a compatible realization contract.
Interaction behavior should specify only desired interaction dynamics;
physical feasibility should be realized independently by the robot.

\section*{Reproducibility}

All experiments are driven by fixed, deterministic scenarios and are
verified by an accompanying test suite that runs each scripted benchmark
end to end and asserts its headline claims directly (torque-limit
compliance, solver feasibility, workspace-bound margins, and the
horizon-wide-versus-first-step-only comparison of
Section~\ref{sec:torque-activation}). It also asserts samplewise closure
of the residual decomposition on the planar and frozen FR3 models rather
than relying on inspection
of a saved figure. Physical quantities (position, torque, and violation
counts) are bit-for-bit reproducible from the fixed scenarios; solve
times are wall-clock and are only ever bounded, never asserted exactly.
The warm-start/Hessian-conditioning solve-time diagnosis of
Section~\ref{sec:fr3-results} is produced by a dedicated script
(\texttt{run\_fr3\_timing\_study.py}) separate from the main benchmark,
with unit tests checking the exact slack-variable matrix transformation,
confirming that warm-starting converges to the same command as a cold solve,
and verifying that resetting a controller clears its carried warm-start
state. Code, configuration, and saved numerical
artifacts backing every figure and table in this paper are available in
the project repository.

\IEEEtriggeratref{11}

\end{document}